\def\eqref#1{equation~\ref{#1}}
\def\1{\bm{1}}
\DeclareMathAlphabet{\mathsfit}{\encodingdefault}{\sfdefault}{m}{sl}
\SetMathAlphabet{\mathsfit}{bold}{\encodingdefault}{\sfdefault}{bx}{n}
\newcommand{\R}{\mathbb{R}}
\DeclareMathOperator*{\argmin}{arg\,min}
\def\tilde{\widetilde}
\def\EDir{E_{\mathrm{Dir}}}
\def\EProj{E_{\mathrm{Proj}}}
\newcommand\realCaseMetricsWidth{50mm}
\theoremstyle{plain}
\newtheorem{theorem}{Theorem}[section]
\newtheorem{proposition}[theorem]{Proposition}
\newtheorem{lemma}[theorem]{Lemma}
\theoremstyle{definition}
\newtheorem{definition}[theorem]{Definition}
\theoremstyle{remark}
\newcommand{\cmark}{\ding{51}}%
\newcommand{\xmark}{\ding{55}}%
\newcommand{\proj}{\mathcal{P}}
\newcommand{\cone}{\mathcal{K}}
\newcommand{\interior}{\mathrm{Int}}
\newcommand{\dist}{d_H}
\newcommand{\cgeq}{\geq_{\cone}}
\newcommand{\cleq}{\leq_{\cone}}
\newcommand{\cll}{\ll_{\cone}}
\newcommand{\ceigenvector}[1][]{
  \ifthenelse{\equal{#1}{}}
  {{x_f}}
  {{x}_{#1}}}
\newcommand{\Nrank}{\mathrm{NumRank}}
\newcommand{\PD}[2][]{
  \ifthenelse{\equal{#1}{}}
  {\textcolor{red}{{PD: #2}\marginpar{\textcolor{red}{PD}}}}
  {\textcolor{red}{{PD: #2}\marginpar{\textcolor{red}{PD: #1}}}}
  }
\title{Are We Measuring Oversmoothing \\in Graph Neural Networks Correctly?}
\author{Kaicheng Zhang\thanks{Equal Contribution} \\
  \makebox[7cm][l]{School of Mathematics and Maxwell Institute}\\ 
  University of Edinburgh\\
  \texttt{K.Zhang-60@sms.ed.ac.uk} \\
  \And
  Piero Deidda\footnotemark[1]\\
  \makebox[7cm][l]{Gran Sasso Science Institute} \\
  Scuola Normale Superiore \\
  \texttt{piero.deidda@sns.it} \\
  \AND  
  Desmond Higham \\
  \makebox[7cm][l]{School of Mathematics and Maxwell Institute}\\
  University of Edinburgh\\
  \texttt{d.j.higham@ed.ac.uk}
  \And 
  Francesco Tudisco\\
  \makebox[7cm][l]{School of Mathematics and Maxwell Institute}\\
  University of Edinburgh\\
  Gran Sasso Science Institute\\
  Miniml.AI\\
  \texttt{f.tudisco@ed.ac.uk}
}
\def\cite{\citep}
\def\eqref#1{(\ref{#1})}
\begin{document}

\maketitle

\begin{abstract}
Oversmoothing is a fundamental challenge in graph neural networks (GNNs): as the number of layers increases, node embeddings become increasingly similar, and model performance drops sharply. 
Traditionally, oversmoothing has been quantified using metrics that measure the similarity of neighbouring node features, such as the Dirichlet energy. 
We argue that these metrics have critical limitations and fail to reliably capture oversmoothing in realistic scenarios. For instance, they provide meaningful insights only for very deep networks, while typical GNNs show a performance drop already with as few as 10 layers.  As an alternative, we propose measuring oversmoothing by examining the numerical or effective rank of the feature representations. We provide extensive numerical evaluation across diverse graph architectures and datasets to show that rank-based metrics consistently capture oversmoothing, whereas energy-based metrics often fail. Notably, we reveal that drops in the rank align closely with performance degradation, even in scenarios where energy metrics remain unchanged. Along with the experimental evaluation, we provide theoretical support for this approach, clarifying why Dirichlet-like measures may fail to capture performance drop and proving that the numerical rank of feature representations collapses to one for a broad family of GNN architectures. 
\end{abstract}

\section{Introduction}
Graph neural networks (GNNs) have emerged as a powerful framework for learning representations from graph-structured data, with applications spanning knowledge retrieval and reasoning \citep{tianKnowledgeGraphKnowledge2022,pengKnowledgeGraphsOpportunities2023}, personalised recommendation systems \cite{pengSVDGCNSimplifiedGraph2022,damianouGraphFoundationModels2024}, social network analysis \cite{fanGraphNeuralNetworks2019}, and 3D mesh classification \cite{shiPointGNNGraphNeural2020}. Central to most GNN architectures is the message-passing paradigm, where node features are iteratively aggregated from their neighbours and transformed using learned functions, such as multi-layer perceptrons or graph-attention mechanisms.

However, the performance of message-passing-based GNNs is known to deteriorate after only a few layers, essentially placing a limit on their depth. This issue, often linked to the increasingly similar learned features as GNNs deepen, is known as oversmoothing \cite{liDeeperInsightsGraph2018,ntRevisitingGraphNeural2019,wuNonasymptoticAnalysisOversmoothing2022,ruschSurveyOversmoothingGraph2023,zhaoUnderstandingOversmoothingDiffusionBased2024,arnaiz-rodriguezOversmoothingOversquashingHeterophily2025}. 

In recent years, oversmoothing in GNNs, as well as methods to alleviate it, have been studied based on the decay of some node feature similarity metrics, such as the Dirichlet energy and its variants \cite{oonoGraphNeuralNetworks2019, caiNoteOversmoothingGraph2020,bodnarNeuralSheafDiffusion2022,nguyenRevisitingOversmoothingOversquashing2022,digiovanniUnderstandingConvolutionGraphs2023,wuDemystifyingOversmoothingAttentionbased2023,rothRankCollapseCauses2023}. At a high level, most of these metrics directly measure the norm of the absolute deviation from the dominant eigenspace of the message-passing matrix. In linear GNNs without bias terms, this eigenspace is often known and easily computable via e.g.\ the power method. However, when nonlinear activation functions or biases are used, the dominant eigenspace may change, causing these oversmoothing metrics to fail and give false negative signals about the oversmoothing state of the learned features. 

While these metrics are often considered as sufficient but not necessary evidence for oversmoothing  \cite{ruschSurveyOversmoothingGraph2023}, there is a considerable body of literature using these unreliable metrics as their evidence for non-occurrence of oversmoothing in GNNs \cite{zhouDirichletEnergyConstrained2021,chenPreventingOversmoothingHypergraph2022,ruschGraphCoupledOscillatorNetworks2022,wangACMPAllenCahnMessage2022,maskeyFractionalGraphLaplacian2023,nguyenCoupledOscillatorsGraph2023,ruschGradientGatingDeep2023, eppingGraphNeuralNetworks2024,scholkemperResidualConnectionsNormalization2024,wangNonconvolutionalGraphNeural2024,rothSimplifyingTheoryOversmoothing2024,wangUnderstandingOversmoothingGNNs2025,arnaiz-rodriguezOversmoothingOversquashingHeterophily2025}.

However, as we show in \Cref{sec:experiments}, the performance degradation of GNNs trained on real datasets often happens well before any noticeable decay in these oversmoothing metrics can be observed. The vast majority of empirical studies in the literature that observe the decay of Dirichlet-like energy metrics are conducted over the layers of very deep but untrained (with randomly sampled weights) or effectively untrained\footnote{Deep networks (with, say, over 100 layers) that are trained but whose loss and accuracy remain far from acceptable.} GNNs \cite{wangACMPAllenCahnMessage2022,ruschGraphCoupledOscillatorNetworks2022,ruschGradientGatingDeep2023,wuDemystifyingOversmoothingAttentionbased2023,rothSimplifyingTheoryOversmoothing2024,wangUnderstandingOversmoothingGNNs2025}, where the decay of the metrics is only driven by the small weight initialization. Instead, we show that when GNNs of different depths are trained with proper weight initialization, these metrics do not correlate with the model's performance degradation.

Furthermore, we argue that these metrics can only indicate oversmoothing when their values converge exactly to zero, corresponding to either an exact alignment to the dominant eigenspace or to the feature representation matrix collapsing to the all-zero matrix. This double implication presents an issue: in realistic settings with a large but not excessively large number of layers, we may observe the decay of the oversmoothing metric by, say, two orders of magnitude while still being far from zero. In such cases, it is unclear whether the features are aligning with the dominant eigenspace, simply decreasing in magnitude, or exhibiting neither of the two behaviours. As a result, these types of metrics provide little to no explanation for the degradation of GNN performance.

As an alternative to address these shortcomings, we advocate for the use of a continuous approximation of the rank of the network's feature representations to measure oversmoothing. Collapse of the rank of the feature representations was already considered as a cause of oversmoothing, e.g. in \cite{guoContraNormContrastiveLearning2023, rothRankCollapseCauses2023}, however rank rank-based measures were never explicitly studied, i.e. they were never compared with other oversmoothing measures and their capacity in measuring oversmoothing was never quantified, theoretically and/or numerically. Our work fills this gap. Indeed, our experimental evaluation across various GNN architectures trained for node classification demonstrates that continuous rank relaxations, such as the numerical rank and the effective rank, correlate strongly with performance degradation in independently trained GNNs---even in settings where popular energy-like metrics show little to no correlation.

Overall, the main contributions of this paper are as follows: 
\begin{itemize}[leftmargin=*,itemsep=0pt,topsep=0pt]
    \item We review popular oversmoothing metrics in the current literature and simplify their theoretical analysis from a novel perspective of nonlinear activation eigenvectors. 
    
    \item 
    We notice that the rank can be a better metric for quantifying oversmoothing, and thereby we redefine oversmoothing in GNNs as the convergence towards a low-rank matrix rather than to a matrix of exactly rank one.
    
    \item We provide extensive numerical evidence that continuous rank relaxation functions provide a much more compelling measure of oversmoothing than commonly used Dirichlet-like metrics
    
\end{itemize}

Additionally, we investigate theoretically the causes of decay of the numerical rank. In particular, we show that both the aggregation matrices and the nonlinear activation functions can contribute to the decay. Our theoretical study is restricted to linear GNNs and nonlinear and non-negative GNNs where the eigenvector of the message-passing matrix is also the eigenvector of the nonlinear activation function. For these models, we prove that the numerical rank of the features converges exactly to one. Such results provide theoretical support to our empirical evidence that oversmoothing may occur independently of the weights' magnitude and align with our perspective on oversmoothing from the point of view of nonlinear activation functions.

\section{Background}
\subsection{Graph Convolutional Network}\label{sec:GCN}
Let $\mathcal G=(\mathcal V,\mathcal E)$ be an undirected graph with $\mathcal V$ denoting its set of vertices and $\mathcal E\subseteq \mathcal V\times \mathcal V$ its set of edges. Let $\tilde{A}\in\mathbb R^{N\times N}$ be the unweighted adjacency matrix with $N = |\mathcal V|$ being the total number of nodes, $|\mathcal E|$ being the total number of edges of $\mathcal G$ and $A$ the corresponding symmetric adjacency matrix normalized by the node degrees:
$A = \tilde D^{-
1/2}\tilde A\tilde D^{-1/2}$,
where $\tilde D = D + I$, $D$ is the diagonal degree matrix of the graph $\mathcal G$, and $I$ is the identity matrix. The rows of the feature matrix $X\in\mathbb R^{N\times d}$ are the concatenation of the $d$-dimensional feature vectors of all nodes in the graph. At each layer $l$, the node feature update of Graph Convolutional Network (GCN) \cite{kipfSemiSupervisedClassificationGraph2016} follows 
$X^{(l+1)} = \sigma(AX^{(l)}W^{(l)})$
where 
$\sigma$ is a nonlinear activation function, 
applied component-wise, and $W^{(l)}$ is a trainable weight matrix.

\subsection{Graph Attention Network} \label{sec:GAT}
Graph Attention Networks (GATs) \cite{velickovicGraphAttentionNetworks2017, brodyHowAttentiveAre2021} perform graph convolution via a layer-dependent message-passing matrix $A^{(l)}$ learned through an attention mechanism 
$A^{(l)}_{ij} = \text{softmax}_j(\sigma_a(p_1^{(l)\top} W^{(l)\top} X_{i,:} + p_2^{(l)\top} W^{(l)\top} X_{j,:}))$
where $p_i^{(l)}$ are learnable parameter vectors, $X_{i,:},X_{j,:}$ denote the feature of the $i$-th and $j$th nodes respectively, the activation $\sigma_a$ is typically chosen to be $\text{LeakyReLU}$, and $\text{softmax}_j$ corresponds to the row-wise normalization
$\text{softmax}_j(A_{ij})= \exp(A_{ij})/\sum_{j'}\exp(A_{ij'})$. 
The corresponding feature update is 
$X^{(l+1)} = \sigma(A^{(l)}X^{(l)}W^{(l)}) \, .$

\section{Oversmoothing}
Oversmoothing can be broadly understood as an increase in similarity between node features as inputs are propagated through an increasing number of message-passing layers, leading to a noticeable decline in GNN performance. However, the precise definition of this phenomenon varies across different sources. Some works define oversmoothing more rigorously as the alignment of all feature vectors with each other. This definition is motivated by the behaviour of a linear GCN:
$X^{(l+1)} = A\cdots A X^{(0)}W^{(0)}\dots W^{(l)}$.
Indeed, if $\tilde{A}$ is the adjacency matrix of a fully connected graph, $A$ will have spectral radius equal to $1$ with multiplicity $1$, and $A^l$ will converge toward the eigenspace~spanned by the dominant eigenvector. Precisely, 
$A^l \to uv^\top$ as $l\rightarrow \infty$, 
where $Au=u$ and $A^\top v=v$, see e.g.~\cite{tudiscoComplexPowerNonnegative2015}.

As a consequence, if the product of the weight matrices $W^{(0)}\cdots W^{(l)}$ does not diverge in the limit $l\rightarrow \infty$, then the features degenerate to a matrix having rank at most one, where all the features are aligned with the dominant eigenvector $u$. Mathematically, if we assume $u$ to be such that $\|u\|=1$, this alignment can be expressed by stating that the difference between the features and their projection onto $u$, given by $\|X^{(l)} - uu^\top X^{(l)}\|$, converges to zero.

\subsection{Existing Oversmoothing Metrics}\label{sec:existing_over_metric}
Motivated by the discussion about the linear case, oversmoothing is thus typically quantified and analysed in terms of the convergence of some node similarity metrics towards zero. In particular, in most cases, it is measured exactly by the alignment of the features with the dominant eigenvector of the matrix $A$. The most prominent metric that has been used to quantify oversmoothing is the Dirichlet energy, which measures the norm of the difference between the degree-normalized neighbouring node features \cite{caiNoteOversmoothingGraph2020, ruschSurveyOversmoothingGraph2023} 
\begin{equation}\label{eq:DirE}
    \EDir(X) = \sum_{(i,j)\in \mathcal E} \left\|\frac{X_{i,:}}{u_i} -\frac{X_{j,:}}{u_j} \right\|^2_2,  
\end{equation}
where $u_i$ is the $i$-th entry of the dominant eigenvector of the message-passing matrix. It thus immediately follows from our discussion on the linear setting that $\EDir(X^{(l)})$ converges to zero as $l\to \infty$ for a linear GCN with converging weights product $W^{(0)}\cdots W^{(l)}$. This intuition suggests that a similar behaviour may occur for ``smooth-enough'' nonlinearities. 

In particular, in the case of a GCN, the dominant eigenvector $u$ is defined by $u_i=\sqrt{1+d_i}$ and \cite{caiNoteOversmoothingGraph2020} have proved that, using LeakyReLU activation functions, it holds  $\EDir(X^{(l+1)})\leq s_l\bar\lambda \EDir(X^{(l)})$, where $s_l=\|W^{(l)}\|_2$ is the largest singular value of the weight matrix $W^{(l)}$, and $\bar\lambda =(1-\min_i\lambda_i)^2$, where $\lambda_i\in (0,2]$ varies among the nonzero eigenvalues of the normalized graph Laplacian $\tilde\Delta = I - A =I- \tilde D^{-\frac12}\tilde A\tilde D^{-\frac12}$. 

Similarly, in the case of GATs, the matrices $A_i$ are all row stochastic, meaning that $u_i=1$ for all $i$. In this case, it has been proved that whenever the product of the entry-wise absolute value of the weights is bounded, that is $\|\Pi_{k=1}^\infty |W^{(k)}|\| < \infty$, then the following variant of the Dirichlet energy decays to zero \cite{wuDemystifyingOversmoothingAttentionbased2023} 
\begin{gather}\label{eq:demystify_mu}
    \EProj(X) = \|X-\proj X\|^2_F    
\end{gather}
where $\proj = uu^\top$ is the projection matrix on the space spanned by the dominant eigenvector $u$ of the matrices $A^{(l)}$.

Note that both these metrics, $\EProj$ and $\EDir$, can be used only if the dominant eigenvector of $A^{(l)}$ is the same for all $l$; this is, for example, the case with row stochastic matrices or when $A^{(l)}=A$ for all~$l$. Moreover, both these metrics essentially measure the deviation of the feature representations from the dominant eigenspace of the aggregation matrices $A^{(l)}$. So we expect them to perform very similarly in capturing oversmoothing. In particular, it is not difficult to show that they are equivalent metrics from a mathematical point of view, i.e. there exist constants $C_1,C_2>0$ such that $C_1\EDir(X)\leq \EProj(X)\leq C_2 \EDir(X)$, see \cref{Lemma_edir_eproj_equivalence}. 
\subsection{A Unifying Perspective Based on the Eigenvectors of Nonlinear Activations}
We present here a unifying and more general perspective of the necessary conditions to have oversmoothing in the sense that $\EProj$ and $\EDir$ decay to zero, based on the concept of eigenvectors for a nonlinear activation function. In the interest of space, longer proofs for this and the subsequent sections are moved to \Cref{app:proofs}.
\begin{definition}\label{def_nonlinear_eigenvector}
    We say that a vector $u\in \mathbb{R}^N\setminus\{0\}$ is an eigenvector of the (nonlinear) activation function $\sigma:\mathbb R^N\to\mathbb R^n$ if for any $t\in \mathbb{R}\setminus\{0\}$, there exists $\mu_t\in \mathbb{R}$ such that $\sigma(t u)=\mu_t u$.
\end{definition}
With this definition, we can now provide a unifying characterization of message-passing operators $A^{(l)}$ and activation functions $\sigma$ that guarantee the convergence of the Dirichlet-like energy metrics $\EProj$ and $\EDir$ to zero for the feature representation sequence defined by $X^{(l+1)} = \sigma(A^{(l)}X^{(l)}W^{(l)})$. 
Specifically, Theorem~\ref{thm:main_linear} shows that this holds provided all matrices $A^{(l)}$ share a common dominant eigenvector $u$, which is also an eigenvector of $\sigma$.

\begin{theorem}\label{thm:main_linear}
Let $X^{(l+1)}=\sigma(A^{(l)} X^{(l)} W^{(l)})$, $l=1,\dots,L$,  be a GNN such that $u$ is the dominant eigenvector of $A^{(l)}$ for any $l$ and also an eigenvector of the activation $\sigma$. If $\sigma$ is $1$-Lipschitz, namely $\|\sigma(x)-\sigma(y)\|\leq \|x-y\|$ for any $x,y$,
and $\lim_{L\to \infty}\prod_{l=0}^L\|(I-\proj)A^{(l)}\|_2 \|W^{(l)}\|_2=0$, 
then
\[
\EProj(X^{(L)})\to 0 \quad \text{ as }\quad  L\to \infty.
\] 
\end{theorem}

The eigenvector assumption shared by $A^{(l)}$ and $\sigma$ recurs throughout our theoretical analysis, and it aligns with existing results in the literature. For example, in the case of GCNs, the matrix $A^{(l)} = A$ is symmetric, and thus $\|I - \proj A^{(l)}\|_2 = \lambda_2$. Therefore, when $\sigma = \text{LeakyReLU}$, we obtain the result by \cite{caiNoteOversmoothingGraph2020} as convergence to zero is guaranteed if $\|W^{(l)}\|_2 \leq \lambda_2$. Note in fact that the choice $\sigma = \text{LeakyReLU}$ satisfies our eigenvector assumption since $u \geq 0$ by the Perron-Frobenius theorem, and thus LeakyReLU$(t u) = \alpha t u$ with $\alpha$ depending only on the sign of $t$. 
Similarly, in the case of GATs, the matrices $A^{(l)}$ are stochastic for all $l$, implying that $u = \mathbbm{1}$ is the constant vector with $(u)_i = 1$ for all $i$. If $\sigma = \otimes \psi$ is a nonlinear activation function acting entry-wise through $\psi$, then $\sigma(t\mathbbm{1}) = \psi(t) \mathbbm{1}$. Therefore, Theorem~\ref{thm:main_linear} implies that if the weights are sufficiently small, the features align independently of the activation function used. This is consistent with the results in \cite{wuDemystifyingOversmoothingAttentionbased2023}. 
However, we note that the bounds on the weights required by Theorem~\ref{thm:main_linear} and those in \cite{wuDemystifyingOversmoothingAttentionbased2023} on the weights $W^{(l)}$ are not identical, and it is unclear which of the two is more significant. Nonetheless, in both cases, having bounded weights along with any $1$-Lipschitz pointwise activation function is a sufficient condition for  $\EProj$ to converge to zero as the depth grows in a GAT. In addition to offering a different and unifying theoretical perspective on the results in \cite{caiNoteOversmoothingGraph2020,wuDemystifyingOversmoothingAttentionbased2023}, we highlight the simplicity of our eigenvector-based proof, which provides added clarity on the theoretical understanding of this phenomenon.

\section{Energy-like Metrics: What Can Go Wrong}
Energy-like metrics such as $\EDir$ and $\EProj$ are among the most commonly used oversmoothing metrics. However, they suffer from inherent limitations that hinder their practical usability and informational content.

One important limitation of these metrics is that they indicate oversmoothing only in the limit of infinitely many layers, when their values converge exactly to zero. Since they measure a form of absolute distance, a small but nonzero value does not provide any meaningful information. On the other hand, convergence to zero corresponds to either perfect alignment with the dominant eigenspace or the collapse of the feature representation matrix to the all-zero matrix. While the former is a symptom of oversmoothing, the latter does not necessarily imply oversmoothing. Moreover, this convergence property requires the weights to be strongly bounded. However, in most practical cases, performance degradation is observed even in relatively shallow networks, far from being infinitely deep, and with weight magnitudes arbitrarily larger than what is prescribed by  \cite{caiNoteOversmoothingGraph2020,wuDemystifyingOversmoothingAttentionbased2023} or \cref{thm:main_linear}. This aligns with our intuition and what occurs in the linear case. Indeed, for a linear GCN, even when the features $X^{(l)}$ grow to infinity as $l\to\infty$, one observes that $X^{(l)}$ is dominated by the dominant eigenspace of $A$, even for finite and possibly small values of $l$, depending on the spectral gap of the graph.
More precisely, the following theorem holds:
\begin{theorem}\label{theorem_decay_linear_model}
Let $X^{(l+1)}=AX^{(l)}W^{(l)}$ be a linear GCN. Let $\lambda_1, \lambda_2$ be the largest and second-largest eigenvalues (in modulus) of $A$, respectively. Assume the weights $\{W^{(l)}\}_{l=1}^\infty$ are randomly sampled from i.i.d.\ random variables with distribution $\nu$ such that $
\int \log^+(\|W\|) d\nu + \int \log^+(\|W^{-1}\|) d\nu < \infty$, 
with $\log^+(t)=\max\{\log(t),\,0\}$.
If $|\lambda_2/\lambda_1| < 1$, then almost surely
\[
\lim_{l\to\infty}\frac{\|(I-\mathcal{P})X^{(l)}\|_F}{\|\mathcal{P}X^{(l)}\|_F} = 0
\]
with a linear rate of convergence $|\lambda_2/\lambda_1|$. 
\end{theorem}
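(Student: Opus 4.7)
The plan is to unroll the linear recursion, exploit the spectral decomposition of $A$, and reduce the statement to a subexponential growth question about the random matrix product $W^{[l]} := W^{(0)}\cdots W^{(l-1)}$ handled by the Furstenberg--Kesten and Oseledets theorems. First, iterating the recursion yields the closed form $X^{(l)} = A^l X^{(0)} W^{[l]}$. Since $A$ as defined in \eqref{eq_norm_adjacency_matrix} is symmetric, it admits an orthogonal eigendecomposition; writing $A = \lambda_1 \mathcal{P} + R$ with $R = (I-\mathcal{P})A$, one gets $\mathcal{P} R = R\mathcal{P} = 0$ and $\|R\|_2 = |\lambda_2|$, hence $A^l = \lambda_1^l \mathcal{P} + R^l$ with $\|R^l\|_2 = |\lambda_2|^l$.

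Substituting into the closed form gives $\mathcal{P} X^{(l)} = \lambda_1^l \mathcal{P} X^{(0)} W^{[l]}$ and $(I-\mathcal{P}) X^{(l)} = R^l X^{(0)} W^{[l]}$. Using $\|MN\|_F \le \|M\|_2 \|N\|_F$, the ratio of interest is bounded above by
\[
\left|\frac{\lambda_2}{\lambda_1}\right|^l \cdot \frac{\|X^{(0)} W^{[l]}\|_F}{\|\mathcal{P} X^{(0)} W^{[l]}\|_F}.
\]
The deterministic geometric decay is factored out, and the remaining task is to show that the right-hand auxiliary ratio is subexponential in $l$, almost surely.

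To control the auxiliary ratio I would invoke the Furstenberg--Kesten theorem: under the stated $\log^+$ integrability of $\|W\|$ and $\|W^{-1}\|$, the limits $\gamma_1 = \lim \tfrac{1}{l}\log\|W^{[l]}\|_2$ and $-\gamma_d = \lim \tfrac{1}{l}\log\|(W^{[l]})^{-1}\|_2$ exist and are finite almost surely, where $\gamma_1 \ge \gamma_d$ are the extreme Lyapunov exponents of the product. The numerator is then bounded by $\|X^{(0)}\|_F \|W^{[l]}\|_2 = e^{l(\gamma_1 + o(1))}$. For the denominator, rewrite $\mathcal{P} X^{(0)} W^{[l]} = \|u\|^{-2}\, u\, (u^\top X^{(0)}) W^{[l]}$ and appeal to the Oseledets multiplicative ergodic theorem to argue that the deterministic row $u^\top X^{(0)}$ is generic with respect to the random Oseledets filtration, so that $\|(u^\top X^{(0)}) W^{[l]}\|_2$ also grows as $e^{l(\gamma_1 + o(1))}$ a.s. Combining gives the auxiliary ratio $=e^{o(l)}$ and the advertised rate $O(|\lambda_2/\lambda_1|^l)$.

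The hard part will be precisely this last step. Furstenberg--Kesten alone only yields the weaker lower bound $\|(u^\top X^{(0)}) W^{[l]}\|_2 \ge \|u^\top X^{(0)}\|_2 / \|(W^{[l]})^{-1}\|_2 = e^{l(\gamma_d - o(1))}$, leaving a stray factor $e^{l(\gamma_1 - \gamma_d)}$ that would destroy the sharp rate whenever the Lyapunov spectrum is non-degenerate. Matching both norms to the top Lyapunov exponent therefore requires that $u^\top X^{(0)}$ avoid the slow Oseledets subspaces, which follows under mild non-degeneracy assumptions on $\nu$ (for example irreducibility, which in turn forces simplicity of the top Lyapunov exponent via Furstenberg's theorem). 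This is the only place where genuine probability beyond the Furstenberg--Kesten convergence enters the argument.
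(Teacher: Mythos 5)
Your proposal follows essentially the same route as the paper: unroll the recursion to $X^{(l)} = A^l X^{(0)} W^{[l]}$, split via the dominant spectral projector of $A$, factor out the deterministic $|\lambda_2/\lambda_1|^l$ decay, and invoke Furstenberg--Kesten/Oseledets to show that the numerator and denominator of the residual ratio share the same top Lyapunov exponent. Your handling of the spectral part is a touch cleaner than the paper's (you exploit symmetry of $A$ to get $\|R^l\|_2 = |\lambda_2|^l$ exactly, whereas the paper works with Jordan blocks and picks up a harmless $\binom{l}{N}$ polynomial factor), but the essential mechanism is identical. Worth noting: the subtlety you flag at the end --- that the deterministic vector $u^\top X^{(0)}$ must avoid the slow Oseledets subspaces for the lower bound to match the top exponent --- is present in the paper's proof too; the paper invokes "almost any $w$" and then applies it to the fixed vectors $v_1^\top X^{(0)}$ and the rows of $X^{(0)}$ without a non-degeneracy/irreducibility hypothesis on $\nu$ that would make this rigorous. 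You have correctly isolated the one genuine probabilistic subtlety, and your candid acknowledgment that it needs an extra assumption is more careful than the paper's presentation.
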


In particular, the theorem above implies that 
$X^{(l)}= \lambda_1^{l}\big(uv^\top + R(l)\big)$
for some $v$, with $R(l) \sim O(|\lambda_2/\lambda_1|)^l$ and thus, when the spectral gap is large $|\lambda_2/\lambda_1|\ll 1$, $X^{(l)}$ is predominantly of rank one, even for moderate values of $l$. This results in weakly expressive feature representations, independently of the magnitude of the feature weights. This phenomenon can be effectively captured by measuring the rank of $X^{(l)}$, whereas Dirichlet-like energy measures may fail to detect it, as it would, for example, be the case when $\lambda_1>1$, having $X^{(l)}\rightarrow \infty$.

Another important limitation of Dirichlet-like metrics is their dependence on a specific dominant eigenspace, which must either be explicitly known or computed in advance. Consequently, their applicability is strongly tied to the specific architecture of the network. In particular, the dominant eigenvector $u$ of $A^{(l)}$ must be known and remain the same for all $l$. This requirement excludes their use in cases where $A^{(l)}$ varies with $l$.

\section{The Rank as a Measure of Oversmoothing}\label{sec:rank_measure_oversmoothin}

Inspired by the behaviour observed in the linear case, we argue that measuring the rank of feature representations provides a more effective way to quantify oversmoothing, in alignment with recent work on oversmoothing \cite{guoContraNormContrastiveLearning2023, rothRankCollapseCauses2023}. However, since the rank of a matrix is defined as the number of nonzero singular values, it is a discrete function and thus not suitable as a measure. A viable alternative is to use a continuous relaxation that closely approximates the rank itself.

Examples of possible continuous approximations of the rank include the numerical rank, the stable rank, and the effective rank~\cite{royEffectiveRankMeasure2007a, rudelsonSamplingLargeMatrices2006, aroraImplicitRegularizationDeep2019}, defined as follows
\[
\text{StabRank}(X) = \frac{\|X\|_*^2}{\|X\|_F^2}, \quad \text{NumRank}(X) = \frac{\|X\|_F^2}{\|X\|_2^2}, \quad \text{Erank}(X) = \exp\left( -\textstyle{\sum_k p_k \log p_k}\right)
\]

where $\|X\|_* = \sum_i \sigma_i$ is the nuclear norm, and 
given the singular values $\sigma_1 > \sigma_2 > \dots > \sigma_{\min\{N,d\}}$ of $X$, $p_k = \sigma_k/ \sum_i \sigma_i$ is defined as the $k$-th normalized singular value.
These rank relaxation measures exhibit similar empirical behaviour as shown in Section~\ref{sec:experiments}. 

In practice, measuring oversmoothing in terms of a continuous approximation of the rank helps to address the limitations of Dirichlet-like measures. Specifically, it offers the following advantages:  
(a) it is scale-invariant, meaning it remains informative even when the feature matrix converges to zero or explodes to infinity;  
(b) it does not rely on a fixed, predetermined eigenspace but instead captures convergence of the feature matrix toward an arbitrary lower-dimensional subspace;  
(c) it allows for the detection of oversmoothing in shallow networks without requiring exact convergence to rank one. A small value of the effective rank directly implies that the feature representations are low-rank, suggesting a potentially suboptimal network architecture.

In \Cref{tab:toy_example}, we present a toy example illustrating that classical oversmoothing metrics fail to correctly capture oversmoothing unless the features are perfectly aligned. This observation implies that these metrics can quantify oversmoothing only when the rank of the feature matrix converges exactly to one. In contrast, continuous rank functions provide a more reliable measure of approximate feature alignment. 
Later, in Figure~\ref{fig:metric_cora_eg}, we demonstrate that the same phenomenon occurs in GNNs trained on real datasets, where exact feature alignment is rare. In such cases, classical metrics remain roughly constant, whereas the rank decreases, coinciding with a sharp drop in GNN accuracy.

\begin{figure}[t]
  \begin{minipage}[t]{0.4\textwidth}
    \setlength{\tabcolsep}{0.0mm}
    \begin{tabular}{c|c|c|c|c}
        & \#\,1 & \#\,2 & \#\,3 & \#\,4 \\
        & \includegraphics[width=10mm,clip, trim = 1cm .5cm 1cm .5cm]{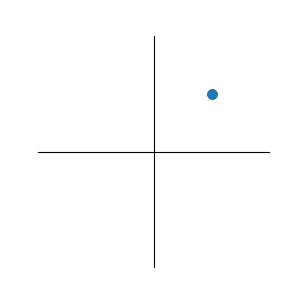} & \includegraphics[width=10mm,clip, trim = 1cm .5cm 1cm .5cm]{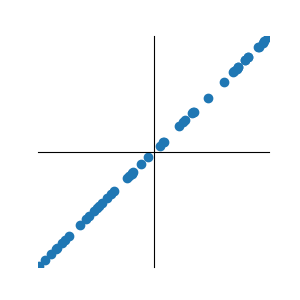} & \includegraphics[width=10mm,clip, trim = 1cm .5cm 1cm .5cm]{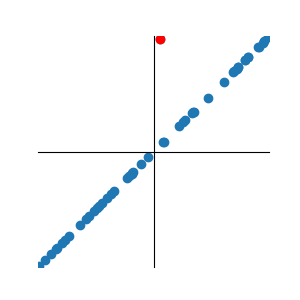} & \includegraphics[width=10mm,clip, trim = 1cm .5cm 1cm .5cm]{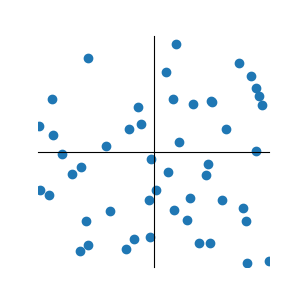} \\
        \hline 
       $\EDir$  &0 & 0 &  13.25 & 77.78 \\
       $\EProj$  & 0 & 0 & 0.83 & 0.97 \\
       MAD & 0 & 0.81 & 0.81 & 0.57 \\
       \small{NumRank} & 1 & 1 & 1.01 & 1.78 \\
       Erank & 1 & 1 & 1.36 & 1.99 
    \end{tabular}
  \end{minipage}\hfill
  \begin{minipage}[c]{0.6\textwidth}
    \caption{Toy scenarios depicting the behaviour of oversmoothing metrics. Each plot contains 50 nodes, each with two features plotted on the x-y axis. The features are: \textbf{\#\,1} of the same value; \textbf{\#\,2} perfectly aligned with the same vector; \textbf{\#\,3} aligned to the same vector except for one (red) point; \textbf{\#\,4} sampled from a uniform distribution. 
    MAD (Sec.~\ref{sec:experiments}) and $\EDir$ give false negative signals in \#\,3 although features are oversmoothing by definition. $\EProj$ can hardly differentiate between \#\,3 and \#\,4, and is thus not robust in quantifying oversmoothing. $\EProj$ and $\EDir$ where computed using the first feature in place of $u$ in \eqref{eq:DirE} and \eqref{eq:demystify_mu}.
    } 
    \label{tab:toy_example} 
  \end{minipage}
\end{figure}

\subsection{Theoretical Analysis of Rank Decay}

In this section, we provide an analytical study proving the decrease of the numerical rank for a broad class of graph neural network architectures under the assumption of linear models or nonlinear models with weight matrices that are entry-wise nonnegative. Our results rigorously show that in these settings, oversmoothing can occur independently of the weight (and thus feature) magnitude and shed light on the possible causes of rank decay.

We begin with several useful observations. Let $u$ be the dominant eigenvector of $A$ corresponding to $\lambda_1$ and satisfying $\|u\|=1$. Consider the projection $\proj = u u^\top$. Given a matrix $X$, we can decompose it as $X = \mathcal{P}X + (I-\mathcal{P})X$.
Since $u$ is a unit vector, it follows that $\|\mathcal{P}\|_2 = 1$, and therefore,  
\begin{equation}\label{projection_reduces_norm}
    \|X\|_2 = \|\mathcal{P}\|_2\|X\|_2 \geq \|\mathcal{P}X\|_2.
\end{equation}
Moreover, since $\mathcal{P}X$ and $(I - \mathcal{P})X$ are orthogonal with respect to the Frobenius inner product, we have $\|\mathcal{P}X + (I-\mathcal{P})X\|_F^2 = \|\mathcal{P}X\|_F^2 + \|(I-\mathcal{P})X\|_F^2$.
Thus, we obtain the following bound:
\begin{equation}\label{eq_num_rank_upper_bound}
    \text{NumRank}(X) = \frac{\|\mathcal{P}X + (I-\mathcal{P})X\|_F^2}{\|X\|_2^2} = \frac{\|\mathcal{P}X\|_F^2 + \|(I-\mathcal{P})X\|_F^2}{\|X\|_2^2} \leq 1 + \frac{\|(I-\mathcal{P})X\|_F^2}{\|X\|_2^2}.
\end{equation}
The above inequality, together with \Cref{theorem_decay_linear_model}, allows us to establish the convergence of the numerical rank for linear networks.

\paragraph{The Linear Case} 
Consider a linear GCN of the form $X^{(l+1)}=AX^{(l)}W^{(l)}$, where $A$ has a simple dominant eigenvalue $\lambda_1$ satisfying $|\lambda_1| \geq |\lambda_2|$. We have already noted that $\|X\|_2 \geq \|\proj X\|_2$, meaning that the numerical rank converges to one if $\|(I-\mathcal{P})X\|_F / \|X\|_2$ decays to zero. This occurs whenever the features grow faster in the direction of the dominant eigenvector than in any other direction. As established in \Cref{theorem_decay_linear_model}, this is almost surely the case in linear GNNs. As a direct consequence, we obtain the following result:  

\begin{theorem}\label{thm:gcn_numrank_1}
     Let $X^{(l+1)}=AX^{(l)}W^{(l)}$ be a linear GCN. Under the same assumptions as in \Cref{theorem_decay_linear_model}, the following identity holds almost surely:  
    \[
    \lim_{l\rightarrow \infty} \mathrm{NumRank}(X^{(l)})=1.
    \]
\end{theorem}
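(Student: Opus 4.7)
The plan is to sandwich $\mathrm{NumRank}(X^{(l)})$ between $1$ and $1+\epsilon_l$ with $\epsilon_l\to 0$ almost surely, using the upper bound \eqref{eq_num_rank_upper_bound} already derived together with the decay result in \Cref{theorem_decay_linear_model}. The argument is essentially a short consequence of what is already in place.

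First, I would record the trivial lower bound $\mathrm{NumRank}(X)=\|X\|_F^2/\|X\|_2^2\geq 1$, valid whenever $X\neq 0$, since $\|X\|_F^2=\sum_i\sigma_i^2\geq \sigma_1^2=\|X\|_2^2$. To invoke this for $X^{(l)}$ I need $X^{(l)}\neq 0$ almost surely. This follows from the integrability assumption $\int \log^+(\|W^{-1}\|)\,d\nu<\infty$, which forces each $W^{(l)}$ to be invertible with probability one, combined with $\lambda_1\neq 0$, so the iteration $X^{(l+1)}=AX^{(l)}W^{(l)}$ preserves nontriviality along almost every sample path whenever $X^{(0)}$ is chosen nonzero and not annihilated by $A$.

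For the upper bound, I would substitute the estimate $\|X^{(l)}\|_2\geq \|\mathcal{P}X^{(l)}\|_2=\|\mathcal{P}X^{(l)}\|_F$ into \eqref{eq_num_rank_upper_bound}, where the equality holds because $\mathcal{P}X^{(l)}=uu^\top X^{(l)}$ is a rank-one matrix and hence has coincident spectral and Frobenius norms. This yields
\[
\mathrm{NumRank}(X^{(l)})\leq 1+\frac{\|(I-\mathcal{P})X^{(l)}\|_F^2}{\|\mathcal{P}X^{(l)}\|_F^2}.
\]
By \Cref{theorem_decay_linear_model} the ratio on the right tends to $0$ almost surely, in fact at the geometric rate $O(|\lambda_2/\lambda_1|^{2l})$. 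Squeezing with the lower bound $\mathrm{NumRank}(X^{(l)})\geq 1$ then delivers the claim.

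There is no real obstacle here: the heavy probabilistic lifting is entirely absorbed by \Cref{theorem_decay_linear_model}, and the rest is a two-line deterministic sandwich. The only mild technical point is verifying that the denominators stay positive so that the ratios make sense, but this is handled uniformly by the almost-sure invertibility of the weight matrices encoded in the moment condition on $\log^+\|W^{-1}\|$. As a byproduct, the argument also gives a geometric rate of convergence $\mathrm{NumRank}(X^{(l)})=1+O(|\lambda_2/\lambda_1|^{2l})$, which I would mention in a concluding remark.
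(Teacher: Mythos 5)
Your argument is exactly the one the paper intends: combine the bound in \eqref{eq_num_rank_upper_bound} with $\|X\|_2\geq\|\mathcal{P}X\|_2=\|\mathcal{P}X\|_F$ to reduce to the ratio controlled by \Cref{theorem_decay_linear_model}, then squeeze against the trivial lower bound $\mathrm{NumRank}\geq 1$. The paper presents this as an immediate corollary of the preceding discussion rather than a standalone proof; your write-up simply makes the sandwich explicit (and, usefully, flags the need for the denominators to stay positive, which the paper leaves implicit), so you and the paper are in full agreement.
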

Extending the result above to general GNNs with nonlinear activation functions is highly nontrivial. 
However, a simplified setting to study is the one where the weights are nonnegative. Indeed, exactly as in the linear case, while generally the rank decreases only on average, considering nonnegative weights yields a monotone decrease.

\paragraph{The Nonnegative Nonlinear Case}
To study the case of networks with nonlinear activations, we make use of tools from the nonlinear Perron-Frobenius theory; we refer to \cite{lemmensNonlinearPerronFrobeniusTheory2012,gautier2013nonlinear} and the reference therein for further details. 

We assume all the intermediate features of the network to be in the positive open cone $\cone:=\R_+^N=\{x\in \R^N\:|\:   x_i>0 \;    \forall i=1,\dots, N\}$. Here, we consider the partial ordering
$x\cleq y \:( x\cll y )$ if and only if $y-x\geq 0\: (y-x>0)$, where the inequalities have to be understood entrywise.
Given two points $x,y\in \cone$, let 
\begin{equation}
\dist(x,y)=\log\Big(\max_i \frac {x_i} {y_i} \cdot \max_i \frac{y_i}{x_i} \Big) 
\end{equation}
denote the Hilbert distance. Note that $d_H$ is not a distance on $\cone$, indeed $\dist(\alpha x, \beta y)=\dist(x,y)$ for any $x,y\in \cone$ and $\alpha,\beta>0$. However, it is a distance up to scaling; that is, it becomes a distance whenever we restrict ourselves to a slice of the cone. Because of this property, $d_H$ is particularly useful for studying the behavior of the rank of the features, which is a scale-invariant function.
Indeed, the next result shows that, under mild assumptions, nonnegative weights generate layers that are nonexpansive in Hilbert distance.

\begin{lemma}\label{Lemma_hilb_contractivity_linear_network}
    Let $A$ be a nonnegative and irreducible matrix with dominant eigenvector $u\in\cone$. Assume $X$ to be strictly positive, $W$ nonnegative with $\min_{j}\max_i W_{ij}>0$, and $\sigma$ a continuous 
    (nonlinear) function that is order preserving, subhomogeneous, and such that $u$ is also an eigenvector of $\sigma$. Then
\begin{equation}\label{eq:contractivity_hilbert}
    \max_{i}\dist \left(\sigma\big((AXW)_{:,i}\big),u\right
    )\leq \beta \max_{i}\dist (X_{:,i},u),
\end{equation}
with $0\leq \beta\leq 1$. Where $Y_{:,i}$ denotes the $i$-th column of $Y$. In particular, if $A$ is contractive in the Hilbert distance or $\sigma$ is strictly sub-homogeneous, then $\beta<1$.
    \end{lemma}

In the above statement, order-preserving means that given any $x,y\in \cone$ with $x\cgeq y$, it holds $\sigma(x)\cgeq \sigma(y)$, while (strictly) subhomogenenous means that $
\sigma(\lambda x) (\cll)\cleq \lambda  \sigma(x)$ for all  $x\in \cone$ and any $\lambda>1$. 
We recall that, as discussed in \ \cite{sittoniSubhomogeneousDeepEquilibrium2024, piotrowski2024fixed}, 
a broad range of activation functions commonly used in deep learning is subhomogeneous and order-preserving on $\cone$. In particular, whenever an activation function is monotone increasing on $\mathbb{R_+}$, it is trivially also order-preserving. Additionally, as we prove in \cref{Lemma_eigenvectors_of_homogeneous_functions}, if the activation function is homogeneous, e.g.\ LeakyReLU, then any nonnegative vector is an eigenvector of $\sigma$. By contrast, if $\sigma$ is strictly subhomogeneous, e.g.\ $\tanh$, then the only strictly positive eigenvector is the entrywise constant one. 

Next, we prove that for neural networks with nonnegative feature representations, the numerical rank goes to 1 as the depth grows to infinity.

\begin{theorem}\label{thm_collapse_in_hilbert_distance}
    Consider a GNN of the form 
    $X^{(l+1)}=\sigma(A^{(l)}X^{(l)}W^{(l)})$
with $X^{(l)}_{:,i}\in\cone$ for any $i=1,\dots,d$. If there exists $u\in\cone$ such that $\lim_{l\rightarrow\infty}\max_{i}\dist (X_{:,i}^{(l)},u)=0$, then
\[
\lim_{l\rightarrow \infty}\Nrank(X^{(l)})=1.
\]
\end{theorem}
\Cref{thm_collapse_in_hilbert_distance} requires that the Hilbert distance between the feature representation and a fixed vector $u$ goes to zero. Note that this is implied by the relative metrics $\EDir(X)/\|X\|$ or $\EProj(X)/\|X\|$ going to zero, but is actually quite weaker. 
Note also that the bound \eqref{eq:contractivity_hilbert} in \Cref{Lemma_hilb_contractivity_linear_network} directly provides guidance on situations where the hypotheses of \cref{thm_collapse_in_hilbert_distance} are satisfied. We discuss several such situations along with some alternative and possibly weaker assumptions for \Cref{thm_collapse_in_hilbert_distance} in detail in \cref{situations_where_rank_drops}. 
Finally, we recall that, because of \cref{Lemma_hilb_contractivity_linear_network}, the last result applies either to GCNs with homogeneous activation function or GATs with any kind of activation function. The convergence of the numerical rank to $1$ may not hold, as discussed in \cref{app:synthetic_results}.

\begin{table}[t]
    \centering
    \setlength{\tabcolsep}{1mm}
    \resizebox{\textwidth}{!}{%
    \begin{tabular}{l c c c c c c c c c}
        \toprule
        \multirow{2}{*}{Dataset}& \multirow{2}{*}{Model}& \multicolumn{2}{c}{$\EDir$} & \multicolumn{2}{c}{$\EProj$} & \multirow{2}{*}{MAD} & \multirow{2}{*}{$\text{Erank}$}  & \multirow{2}{*}{$\text{NumRank}$} & \multirow{2}{*}{\makecell{Accuracy\\ratio}} \\
        \cmidrule(lr){3-4} \cmidrule(lr){5-6}
        & & \footnotesize{Standard} & \footnotesize{Normalized} & \footnotesize{Standard} & \footnotesize{Normalized} & & &\\ 
        \midrule
        \multirow{2}{*}{Cora}& GCN & -0.7871 & 0.6644 & -0.8106 & -0.8309 & -0.2460 & \textbf{0.9724} & 0.5885 &  0.2693 \\ 
        & GAT & -0.9189 & 0.6703 & -0.9469 & -0.6054 & 0.8251 & \textbf{0.9722} & 0.7612 & 0.2493 \\ 
        \midrule
        \multirow{2}{*}{Citeseer} & GCN & -0.8442 & 0.4350 & -0.8913 & -0.8667 & -0.7169 & \textbf{0.9700} & 0.6795 & 0.4380 \\ 
        & GAT  &  -0.9576 & 0.0664 & -0.9585 & -0.9080 & 0.3722 & \textbf{0.9915} & 0.8047 &  0.4672\\
        \midrule
        \multirow{2}{*}{Pubmed} & GCN  & -0.9068 & 0.7006 & -0.8508 &  -0.1109 & 0.6205 & \textbf{0.9464} & 0.9268 & 0.5225 \\  
        & GAT  & -0.8735 & -0.3684 & -0.8541 & -0.4102 & -0.3932 & 0.9270 & \textbf{0.9721} &  0.5564\\
        \midrule
        \multirow{2}{*}{Squirrel} & GCN  & -0.7774&0.4171&-0.7602&-0.3258&-0.8247&0.6316&\textbf{0.9582} &0.8457\\ 
        & GAT  & -0.6864 & -0.5503 & -0.7364 & -0.7253 & 0.5002 & \textbf{0.8538} & 0.6840 & 0.7533 \\
        \midrule
        \multirow{2}{*}{Chameleon} & GCN  & -0.9223&0.1504&-0.9163&-0.8201&-0.8809&\textbf{0.9387}&0.9014&0.6195\\ 
        & GAT  & -0.8721 & 0.1942 & -0.9089 & -0.8234 & 0.2803 & \textbf{0.9446} & 0.8799 &  0.6332\\
        \midrule
        \multirow{2}{*}{\makecell{Amazon\\Ratings}} & GCN  & -0.9297&0.8809&-0.9079&-0.3423&0.9201&\textbf{0.9301}&0.8049 & 0.8562 \\ 
        & GAT  & -0.9388 & 0.5277 & -0.9089 & -0.1617 & 0.6545 & \textbf{0.9248} & 0.8764 & 0.8384 \\
        \midrule
        \multirow{2}{*}{OGB-Arxiv} & GCN  & 0.7738&0.9194&0.5740&-0.2738&0.2822&\textbf{0.9682}&0.9091 &0.0939 \\ 
        & GAT  & -0.4097 &  0.9439 & -0.7230 &  0.8985 & 0.8492 &  0.7740 &  \textbf{0.9781} & 0.2310 \\
        \midrule
        \multicolumn{2}{l}{Average correlation} & -0.7179&0.4036&-0.7571&-0.4504&0.1601&\textbf{0.9103}&0.8374 \\
         \bottomrule

    \end{tabular}
}
    \caption{Correlation between the classification accuracy and the logarithm of metric values on GNNs with LeakyReLU and depths ranging from 2 to 24 layers, separately trained on different homophilic (Cora, Citeseer, Pubmed), heterophilic (Squirrel, Chameleon, Amazon Ratings) and large-scale (OGB-Arxiv) datasets. For Erank and NumRank, we subtract 1 so that both metrics approach zero. The rightmost column reports the ratio of classification accuracy between GNNs with 2 and 24 layers. Some heterophilic datasets may be more resilient to the increasing network depth, in-line with observations from the literature, e.g. \cite{guoTamingOversmoothingRepresentation2023}. Additional results on other datasets, activation functions and additional network components are presented in \cref{app:dataset_results} and \ref{app:component_results}.\\[-2em]}\label{tab:real_nets}
\end{table}

\section{Experiments} \label{sec:experiments}

The vast majority of empirical studies in the literature that observe the decay of Dirichlet-like energy metrics are conducted over the layers of deep but untrained networks \cite{wangACMPAllenCahnMessage2022,ruschGraphCoupledOscillatorNetworks2022,ruschGradientGatingDeep2023,wuDemystifyingOversmoothingAttentionbased2023,rothSimplifyingTheoryOversmoothing2024,wangUnderstandingOversmoothingGNNs2025}. Moreover, the measurements are done by looking at the layers of a single network, rather than different networks of increasing depth.  
We emphasize that this is an overly simplified and unrealistic setting. In this section, we perform an extensive numerical investigation on the behaviour of different oversmoothing metrics when measured on networks of different depths $l=2,\dots,L$, trained in isolation at different depths. Our analysis shows that GNN suffer from significant performance degradation after only few-layers, at which stage the convergence patterns of Dirichlet-like metrics are difficult to observe, while relaxed rank metric already show a significant decrease, well-correlating with the performance drop.

In particular, we compare how different oversmoothing metrics behave compared to the classification accuracy, varying the GNN architectures for node classification on real-world graph data. 
In our experiments, we consider the following metrics:
\begin{itemize}[topsep=0pt, leftmargin=*,itemsep=0pt]
    \item The Dirichlet Energy $\EDir$ \cite{caiNoteOversmoothingGraph2020, ruschSurveyOversmoothingGraph2023} and its variant $\EProj$ \cite{wuDemystifyingOversmoothingAttentionbased2023}. Both are discussed in \cref{sec:existing_over_metric}, see in particular \eqref{eq:DirE} and \eqref{eq:demystify_mu}. 

    \item  Normalized versions of Dirichlet energy and its variant,
    $\EDir(X)/\|X\|_F^2$ and $\EProj(X)/\|X\|_F$.
    Indeed, from our previous discussion, a robust oversmoothing measure should be scale invariant with respect to the features. Metrics with global normalization like the ones we consider here have also been proposed in  \cite{digiovanniUnderstandingConvolutionGraphs2023,rothRankCollapseCauses2023, maskeyFractionalGraphLaplacian2023}.
    \item  The Mean Average Distance (MAD) \cite{chenMeasuringRelievingOversmoothing2020}
     \begin{gather*}
     \textstyle\text{MAD}(X) = \frac1{|\mathcal E|} \sum_{(i,j)\in \mathcal E} \Big(1 - \frac{X_{i,:}^\top X_{j,:}}{|X_{i,:}||X_{j,:}|}\Big). 
     \end{gather*} 
     It measures the cosine similarity between the neighbouring nodes. Unlike previous baselines, this oversmoothing metric does not take into account the dominant eigenvector of the matrices $A^{(l)}$.  

    \item Relaxed rank metrics: We consider the Numerical Rank and Effective Rank. Both are discussed in \cref{sec:rank_measure_oversmoothin}. We point out that from our theoretical investigation, in particular from \eqref{eq_num_rank_upper_bound}, the numerical rank decays to $1$ faster than the decay of the normalized $\EProj$ energy to zero. This further supports the use of the Numerical Rank as an improved measure of oversmoothing.
    \end{itemize}

\begin{figure*}[t]
    \centering
    \includegraphics[width=0.999\linewidth]{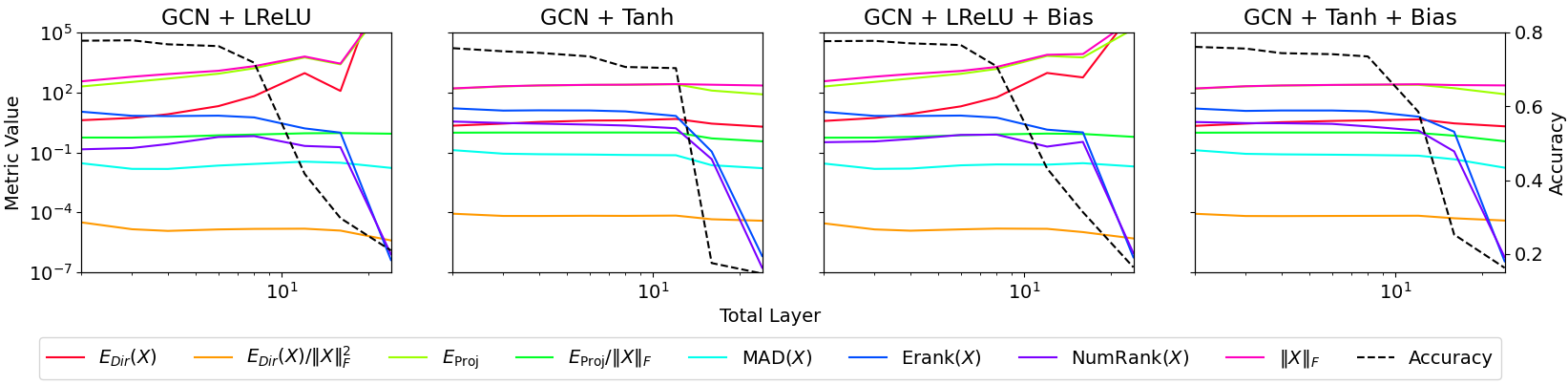}
    \caption{Four examples of the metric behaviours computed at the last hidden layer of separately trained GCNs of increasing depths. For Erank and Numrank, we measure Erank$(X)-r^*_\mathrm{ER}$ and $\Nrank(X)-r^*_\mathrm{NR}$ for some $r^*>1$. In these particular cases, $r^*_\mathrm{ER}<1.85$, $r^*_\mathrm{NR}<1.3$. Note that the effective rank and numerical rank of the input features $X^{(0)}$ are about 1084 and 13.6, respectively. Additional results are attached in \Cref{apd:additional_empirical_results}.}\label{fig:metric_cora_eg}
\end{figure*}


In \cref{tab:real_nets} and \cref{fig:metric_cora_eg}, we train GNNs of a fixed hidden dimension equal to 32 on homophilic, heterophilic and large-scale datasets in their default splits. We follow the standard setups of GCN and GAT as stated in \Cref{sec:GCN,sec:GAT}, and use homogeneous LeakyReLU (LReLU) as the activation function.
For each configuration, GNNs of eight different depths ranging from 2 to 24 are trained. 
The oversmoothing metric and accuracy results are averaged over 10 separately trained GNNs. 
All GNNs are trained with NAdam Optimizer and a constant learning rate of 0.01.
The oversmoothing metrics are computed at the last hidden layer before the output layer.
In \cref{fig:metric_cora_eg} and in \Cref{apd:additional_empirical_results}, we plot the behaviour of the different oversmoothing measures, the norm of the features, and the accuracy of the trained GNNs with increasing depth. These figures clearly show that the network suffers a significant drop in accuracy, which is not matched by any visible change in standard oversmoothing metrics. By contrast, the rank of the feature representations decreases drastically, following closely the behaviour of the network's accuracy.
These findings are further supported by the results shown in \cref{tab:real_nets}, where we compute the Pearson correlation coefficient between the logarithm of every measure and the classification accuracy of every GNN model. 
The use of a logarithmic transformation is based on the understanding that oversmoothing grows exponentially with the length of the network. 

Extensions of these results are provided in \Cref{app:dataset_results,app:component_results}. In \Cref{app:synthetic_results}, we perform an asymptotic ablation study on very deep (300-layer) synthetic networks with randomly sampled, untrained weights. This study serves to validate our theoretical findings on the convergence of relaxed rank metrics and to demonstrate that such untrained settings offer little insight into the ability of existing metrics to quantify oversmoothing in realistic, trained networks.

\vspace{-.5em}
\section{Conclusion}
\vspace{-.5em}
In this paper, we have discussed the problem of quantifying oversmoothing in message-passing GNNs. After simplifying the existing theoretical analysis using nonlinear activation eigenvectors and discussing the limitations of the leading oversmoothing measures, we propose the use of the rank of the features as a better measure of oversmoothing. We provide extensive experiments to validate the robustness of the effective rank against the classical measures.
In addition, we have analysed theoretically the decay of the rank of the features for message-passing GNNs. 
Notably, our work opens new possible research directions that investigate the use of rank-based metrics in novel architectures that can alleviate the oversmoothing phenomenon. Indeed, Dirichlet-like metrics have been employed not only to measure oversmoothing but also to design new architectures able to mitigate this phenomenon, e.g. by considering residual architectures where the magnitude of the residual is regulated according to the Dirichlet energy of neighboring nodes \cite{ruschGradientGatingDeep2023, Jin_graphrhythmnetwork}. In light of our results and the superiority of rank-based metrics in capturing oversmoothing, it is natural to wonder whether the use of rank-based metrics in place of Dirichlet-like ones could better mitigate oversmoothing in similar kind of architectures. We consider this as a promising future research line.

\section{Acknowledgement}
KZ was supported by the EPSRC Centre for Doctoral Training in Mathematical Modelling, Analysis and Computation (MAC-MIGS) funded by the UK Engineering and Physical Sciences Research Council (grant EP/S023291/1), Heriot-Watt University and the University of Edinburgh. PD and FT are members of the Gruppo Nazionale Calcolo Scientifico - Istituto Nazionale di Alta Matematica (GNCS-INdAM). PD was supported by the INdAM-GNCS project “NLA4ML—Numerical Linear Algebra Techniques for Machine Learning”. FT is partially funded by the PRIN-MUR project MOLE (code 2022ZK5ME7) and the PRIN-PNRR project FIN4GEO (code P2022BNB97). DJH was supported by the Advanced Grant “Numerical Analysis for Stable AI” 101198795 from the European Research Council.

\bibliography{neurips_2025/GNN, ICML_submission/Maths}
\bibliographystyle{iclr2026_conference} 

\appendix

\clearpage

\section{Proofs of the main results}\label{app:proofs}

\subsection{Equivalence of $\EProj$ and $\EDir$}

\begin{lemma}\label{Lemma_edir_eproj_equivalence}
Assume the graph $\mathcal{G}$ to be connected and the eigenvector $u$ to be strictly positive $u_i>0$ for all $i$ and such that $\|u\|_2=1$. Then there exist $C_1>0$ and $C_2>0$ such that 
$$C_1\EDir(X)\leq \EProj(X)\leq C_2 \EDir(X) \qquad \forall X\in \mathbb{R}^{N \times d}.$$ 
\end{lemma}
\begin{proof}
    First we show that the Dirichlet energy is equivalent to the modified Dirichlet energy where all the pairs of nodes $(i,j)$ are considered \begin{equation}
    \EDir(X) = \sum_{(i,j)\in \mathcal E} \left\|\frac{X_{i,:}}{u_i} -\frac{X_{j,:}}{u_j} \right\|^2_2 \leq \sum_{i \in\mathcal{V}}\sum_{j \in\mathcal{V}} \left\|\frac{X_{i,:}}{u_i} -\frac{X_{j,:}}{u_j} \right\|^2_2 =\Tilde{\EDir}(X). 
\end{equation}

Second observe that if $i,j \in \mathcal{V}$, since the graph is connected there exists some path $i_1=1,i_2,\dots, i_{n+1}=j$ such that $(i_k,i_{k+1})\in \mathcal{E}$ for all $k$. Thus, using the traingular inequality we get 
\begin{equation}
 \left\|\frac{X_{i,:}}{u_i} -\frac{X_{j,:}}{u_j} \right\|^2_2\leq n \sum_{k=1}^n \left\|\frac{X_{i_k,:}}{u_{i_k}} -\frac{X_{i_{k+1},:}}{u_{i_{k+1}}} \right\|^2_2\leq n\EDir(X).
 \end{equation}
repeating the same argument for any pair of nodes $(i,j)$ we observe that for some constant $C>1$
\begin{equation}\label{}
    \Tilde{\EDir}(X) \leq \tilde{C} \EDir(X). 
\end{equation}

So, to prove the equivalence of $\EDir$ and $\EProj$ it is sufficient to prove the equivalence between $\EProj$ and $\Tilde{\EDir}$.

If we make explicit the expression of the norms in $\Tilde{\EDir}$ we get
\begin{equation}
\begin{aligned}
\Tilde{\EDir}(X)= &\sum_{i \in\mathcal{V}}\sum_{j \in\mathcal{V}} \sum_{k=1}^d\left|\frac{X_{i,k}}{u_i} -\frac{X_{j,k}}{u_j} \right|^2= \sum_{k=1}^d \sum_{i \in\mathcal{V}}\sum_{j \in\mathcal{V}} \left|\frac{X_{i,k}}{u_i} -\frac{X_{j,k}}{u_j}\pm u^T X_{:,k} \right|^2\\
&\leq \sum_{k=1}^d \sum_{i \in\mathcal{V}}\sum_{j \in\mathcal{V}} \frac{2}{u_i}\left|X_{i,k}- (u^T X_{:,k}) u_i\right|^2+ \frac{2}{u_j}\left|X_{j,k}- (u^T X_{:,k}) u_j\right|^2\\
&\leq \frac{4|\mathcal{V}|}{\min_i\{u_i\}}\sum_{k=1}^d \left\|X_{:,k}- uu^T X_{:,k}\right\|^2= \frac{4|\mathcal{V}|}{\min_i\{u_i\}}\EProj(X).
\end{aligned}
\end{equation}

To prove the opposite observe the following
\begin{equation}
    \begin{aligned}
         \EProj(X)=& \sum_{k=1}^d\sum_{i \in\mathcal{V}} u_i \left|\frac{X_{i,k}}{u_i}- (u^T X_{:,k}) \right|^2= \sum_{k=1}^d\sum_{i \in\mathcal{V}} u_i \left|\frac{X_{i,k}}{u_i}- \left(\sum_{h} u_h^2 \frac{X_{h,k}}{u_h}\right) \right|^2\\
         &\leq \max_j\{u_j\}\sum_{k=1}^d\sum_{i \in\mathcal{V}}  \max_j \left|\frac{X_{i,k}}{u_i}-  \frac{X_{j,k}}{u_j} \right|^2\\
         &\leq  \max_j\{u_j\}\sum_{k=1}^d\sum_{i \in\mathcal{V}}  \sum_{j\in \mathcal{V}} \left|\frac{X_{i,k}}{u_i}-  \frac{X_{j,k}}{u_j} \right|^2 =\Tilde{\EDir}(X)
\end{aligned}
\end{equation}
where in the first inequality we have used that, since $\|u\|_2=1$, $\sum_{h} u_h^2 \frac{X_{h,k}}{u_h}$ is a convex combination of $\left\{\frac{X_{h,k}}{u_h}\right\}_h$. In particular, the last inequality concludes the proof.

\end{proof}

\subsection{Proof of \Cref{thm:main_linear}}

We start proving that 
\begin{equation}\label{eq:1:Thm_main_linear}
\|(I-\mathcal{P})X^{(l+1)}\|_F\leq \|(I-\mathcal{P})A^{(l)} X^{(l)}W^{(l)}\|_F,
\end{equation}
    where $\mathcal{P}=uu^\top/\|u\|^2$ is the projection matrix on the linear space spanned by $u$.

    To this end, let $\pi:=\mathrm{span}\{u v^\top\,|\; v\in \R^d\}$ be the $1$-dimensional matrix subspace of the rank-$1$ matrices having columns aligned to $u$. Then it is easy to note that given some matrix $X$, $(I-\proj)X$ provides the projection of the matrix $X$ on the subspace $\pi$, i.e. 
    \begin{equation}
        (I-\proj)X=\mathrm{proj}_\pi(X).
    \end{equation}
    Indeed $\langle(I-\proj)X, u v^\top\rangle_F=\mathrm{Tr}(v u^\top (I-uu^\top/\|u\|^2)X)=0$. In particular, since the projection realizes the minimal distance, we have that 
    \begin{equation}
        \|X-\proj X\|_F\leq \|X- u v^\top\|_F \qquad \forall v\in \R^d.
    \end{equation}
    Now observe that $\sigma(uu^\top A^{(l-1)}X^{(l-1)}W^{(l-1)})=u\bar{v}^\top$ for some $\bar{v}$. Indeed, writing $v^\top=u^\top A^{(l-1)}X^{(l-1)}W^{(l-1)}$, we have that the $i$-th column of $\sigma(uu^\top A^{(l-1)}X^{(l-1)}W^{(l-1)})$ is equal to $\sigma( v_i u)=\bar{v}_i u$ for some $\bar{v}_i$, because $u$ is an eigenvector of $\sigma$. As a consequence we have 
    \begin{equation}
        \begin{aligned}
     \|(I-\proj)X^{(l)}\|_F&\leq \|X^{(l)}-\sigma(u u^\top A^{(l-1)}X^{(l-1)}W^{(l-1)})\|_F\\
     &= \|\sigma(A^{(l-1)}X^{(l-1)}W^{(l-1)})-\sigma(uu^\top A^{(l-1)}X^{(l-1)}W^{(l-1)})\|_F\\ 
     &\leq \|(I-\proj)A^{(l-1)}X^{(l-1)}W^{(l-1)}\|_F  
       \end{aligned}
    \end{equation}
    where we have used the $1$-Lipschitz property of $\sigma$.
    This concludes the proof of \eqref{eq:1:Thm_main_linear}

    To conclude the proof of the theorem observe that, in the decomposition 
    \begin{equation*}
        (I-\proj)A^{(l)}=(I-\proj)A^{(l)}\proj+(I-\proj)A^{(l)}(I-\proj),
    \end{equation*} 
the matrix $(I-\proj)A^{(l)}\proj$ is zero because $A^{(l)}u=\lambda_1^l u$ for any $l$. Thus 
\begin{equation*}
    (I-\proj)A^{(l)}=(I-\proj)A^{(l)}(I-\proj),
\end{equation*}
and, from \eqref{eq:1:Thm_main_linear} and the inequality $\|AB\|_F\leq \|A\|_2\|B\|_F$, we have 
 \begin{equation*}
    \|(I-\proj)X^{(L)}\|_F\leq \Big(\Pi_{l=0}^{L-1}\|(I-\proj)A^{(l)}\|_2\|W^{(l)}\|_2\Big) \|X^{(0)}\|_F.
\end{equation*}
    So the thesis follows from the hypothesis about the product $\Pi_{l=0}^{L-1}\|(I-\proj)A^{(l)}\|_2\|W^{(l)}\|_2$.
 %

\subsection{Proof for \Cref{theorem_decay_linear_model}}

Start by studying the norm of $(I-\mathcal{P})X^{(l)}$. Then looking at the shape of the powers of the Jordan blocks matrix it is not difficult to note that $\tilde{T}^{l}=O({l\choose N}\lambda_2^{l-N})$ for $l$ larger than $N$. In particular if we look at the explicit expression of $(I-\mathcal{P})X^{(l)}$ 
\begin{equation}
(I-P)X^{(l)}= \begin{pmatrix}
              0 &  (I-\mathcal{P})\Tilde{M}
    \end{pmatrix}\begin{pmatrix}
        0 & 0\\
        0 & \Tilde{T}^{l}
    \end{pmatrix} M^{-1}X^{(0)}W^{(0)}\dots W^{(l-1)},
\end{equation}
we derive the upper bound
\begin{equation}
    \|(I-P)X^{(l)}\|_F\leq C {l\choose N}|\lambda_2|^{l-N}\|X^{(0)}W^{(0)}\dots W^{(l-1)}\|_F,
\end{equation}
for some positive constant $C$ that is independent on $l$.
Similarly we can observe that 
\begin{equation}
\begin{aligned}
    &\|\mathcal{P}X^{(l)}\|_F \geq \|u^\top A^{l}X^{(0)}W^{(0)} \dots W^{(l-1)}\|_F=\\
    &=\|\big(\lambda_1^l v_1^\top + u^\top \tilde{M} O\Big({l\choose N}\lambda_2^{l-N}\Big) \tilde{M'}\big) X^{(0)} W^{(0)}\dots W^{(l-1)}\|_F \geq\\ 
    &\geq|\lambda_1|^l\Big(
    \|v_1^\top X^{(0)} W^{(0)}\dots W^{(l-1)}\|_F -\Big\| u^\top \tilde{M} O\Big({l\choose N}\Big(\frac{\lambda_2}{\lambda_1}\Big)^l\tilde{M'}X^{(0)} W^{(0)}\dots W^{(l-1)}\Big\|_F\Big)\geq \\
    &\geq |\lambda_1|^l
    \|v_1^\top X^{(0)} W^{(0)}\dots W^{(l-1)}\|_F \Big(1- O\Big({l\choose N}\Big|\frac{\lambda_2}{\lambda_1}\Big|^l \frac{\|X^{(0)} W^{(0)}\dots W^{(l-1)}\|_F}{\|v_1^\top X^{(0)} W^{(0)}\dots W^{(l-1)}\|_F}\Big)\
    \end{aligned}
\end{equation}

Now observe that under the randomness hypothesis from \cite{furstenbergRandomMatrixProducts1983} and more generally from the Oseledets ergodic multiplicative theorem, we have that for almost any the limit $w\in \mathbb{R}^d$ $\lim_{l\rightarrow \infty}\frac{1}{l}\log\|w^\top W^{(0)}\dots W^{(l-1)}\|=c(\nu)$ exists and is equal to the maximal Lyapunov exponent of the system. In particular for any $w$ and $\epsilon>0$ there exists $l_{w,\epsilon}$ sufficiently large such that for any $l>l_{w,\epsilon}$

\begin{equation}
    c(\nu)-\epsilon\leq \frac{1}{l}\log\|w^\top W^{(0)}\dots W^{(l-1)}\|<c(\nu)+\epsilon
\end{equation}
i.e.
\begin{equation}
    e^{l(c(\nu)-\epsilon)}\leq \|w^\top W^{(0)}\dots W^{(l-1)}\|<e^{l(c(\nu)+\epsilon)} \quad \forall l\geq l_{w,\epsilon}.
\end{equation}
Now take as vector $w$ first the rows of $X^{(0)}$ and then the vector $v_1^\top X^{(0)}$, then almost surely for any $\epsilon$ there exists $l_{\epsilon}$ such that for any $l>l_\epsilon$
\begin{equation}
    e^{l(c(\nu)-\epsilon)}\leq \|w^\top W^{(0)}\dots W^{(l-1)}\|<e^{l(c(\nu)+\epsilon)},
\end{equation}
holding  for any $l\geq l_{\epsilon}$ and any $w\in\{v_1\}\cup\{X_0^\top e_i\}_{i=1}^N$.\\
Next recall that $\|X^{(0)} W^{(0)}\dots W^{(l-1)}\|_F=\sqrt{\sum_i\|e_i^\top X^{(0)} W^{(0)}\|^2}$, meaning that almost surely, for $l\geq l_\epsilon$:
\begin{equation}
 N   e^{l(c(\nu)-\epsilon)}  \leq \|X^{(0)} W^{(0)}\dots W^{(l-1)}\|_F\leq N  e^{l(c(\nu)+\epsilon)}.
\end{equation}

In particular for any $\epsilon$, there exists $l$
sufficiently large such that
\begin{equation}
    \left({l\choose N}\Big|\frac{\lambda_2}{\lambda_1}\Big|^l \frac{\|X^{(0)} W^{(0)}\dots W^{(l-1)}\|_F}{\|v_1^\top  X^{(0)} W^{(0)}\dots W^{(l-1)}\|_F}\right)\leq \left({l\choose N}\Big|\frac{\lambda_2}{\lambda_1}\Big|^l e^{2l\epsilon} \right)
\end{equation}
and thus, since $|\lambda_2|<|\lambda_1|$ and we can choose $\epsilon$ arbitrarily small, almost surely it has limit equal to zero.
In particular we can write 
\begin{equation}
    \lim_{l}\frac{\|(I-\mathcal{P}X^{(l)})\|_F}{\|\mathcal{P}X^{(l)}\|_F}\sim \lim_{l}\frac{{l\choose N}|\lambda_2|^{l-N}\|X^{(0)}W^{(0)}\dots W^{(l)}\|_F}{|\lambda_1|^l\|v_1^\top X^{(0)}W^{(0)}\dots W^{(l)}\|_F}=0
\end{equation}
where we have used the same argument as before to state that the limit is zero.

\subsection{Proof of \Cref{Lemma_hilb_contractivity_linear_network}}

    Since $A$ is nonnegative, then from Perron Frobenius theory \cite{lemmensNonlinearPerronFrobeniusTheory2012}, we know that  
    \begin{equation}
    \dist\Big((AX)_{:,i}, u\Big)= \dist\Big((AX)_{:,i}, \lambda_1(A)u\Big)=\dist\Big((AX)_{:,i}, Au\Big)\leq \beta \dist\big(X_{:,i}, u\big) \qquad \forall i.
    \end{equation}
    for some $\beta\leq 1$, where we have used $\lambda_1(A)>0$ and the scaling invariant property of the Hilbert distance. In particular if $A$ is contractive in Hilbert distance $\beta<1$.

    Then note that, for any $i$, we can write $(AXW)_{:,j}$ as follows
    \begin{equation}
    \big(AXW\big)_{:,j}=\sum_{j}W_{ij}(AX)_{:,j}\,.
    \end{equation}
    Thus we \textbf{CLAIM} that given $x_1, x_2, y\in \cone$ then 
    \begin{equation}
        \dist(x_1+x_2,y)\leq \max\{\dist(x_1,y), \dist(x_2,y)\}.
    \end{equation}
    Observe that if the claim holds, by induction it can trivially be extended from $2$ to $d$ points yielding
    \begin{equation}\label{eq_2_LEMMA_NONEXP}
        \dist\Big((AXW)_{:,j}, u\Big)\leq \max_i \dist\Big(W_{ij}(AX)_{:,i}, u\Big)\leq \max_j \dist\Big((AX)_{:,j}, u\Big)\leq \beta \max_j \dist\Big(X_{:,j}, u\Big),
    \end{equation}
    where we have used the scale-invariance property of the Hilbert distance and the fact that $\max_{i}W_{ij}>0$ for all $j$.

    We miss to prove the claim. To this end, exploiting the expression of the Hilbert distance we write 
    \begin{equation}
    \begin{aligned}
    \dist(x_1+x_2,y)&
    =\log \bigg(\sup_{j}\sup_{i} \frac{(x_1)_i+(x_2)_i}{(y)_i}\frac{(y)_j}{(x_1)_j+(x_2)_j}\bigg)\\
    &\leq 
    \log \bigg(\sup_{i}\sup_{j} \max_{x_1,x_2}\bigg\{\frac{(x_1)_i}{(x_1)_j}, \frac{(x_2)_i}{(x_2)_j}\bigg\}\frac{(y)_j}{(y)_i}\bigg)\\
    &=\max_{x_1,x_2}\big\{\dist(x_1,y), \dist(x_2,y)\big\}.
       \end{aligned}
    \end{equation}
    concluding the proof.

    Next we prove that, a continuous subhomogeneous and order-preserving function $\sigma$ with eigenvector $u$ in the cone, is not nonexpansive in Hilbert distance with respect to $u$.
    Formally we claim that
     \begin{equation}\label{eq_1_LEMMA_NON_EXP}
     \dist \big(\sigma(y), u\big)\leq \dist\big(y, u\big) \qquad \forall y\in \cone.
     \end{equation}

To prove it, let $y \in \cone$ and assume w.l.o.g. that $\|y\|_1=t>0$ and $\|u\|_1=1$, then 
\begin{equation}
    M(y/t u)=\max_{i=1,\dots,N} \frac{y_i}{ t (u)_i)}\geq \frac{\|y\|_1}{t\|u\|}=1
    \qquad
    m(y/t u)=\min_{i=1,\dots,N} \frac{y_i}{t (u)_i}\leq \frac{\|y\|_1}{t\|u\|_1} =1.
\end{equation}
By definition given $x,y\in \cone$, $m(y/x)x\cleq y\cleq M(y/x)x$. Moreover we recall that since $u$ is an eigenvector for any $t>0$ there exists $\lambda_t>0$ such that $\sigma(t u)=\lambda_t u$. Thus we use the subhomogeneity of $\sigma$ and the fact that $u$ is an eigenvector of $\sigma$ to get the following inequalities:
\begin{equation}
 m(y/tu) \lambda_t tu \cleq    \sigma\big(m(y/tu)tu\big)\cleq f(y) \cleq f\big(M(y/tu)tu\big) \cleq  M(y/x_c) \lambda_t tu,
\end{equation}
where the inequalities are strict if $\sigma$ is strictly subhomogeneous.
In particular we have 
$m(f(y)/tu)\geq \lambda_t m(y/tu)$ and $M(f(y)/tu)\leq \lambda_t M(y/tu)$. Finally the last inequalities and the scale invariance property of $\dist$ yield the thesis:
\begin{equation}
 \dist(f(y),u) = \dist(f(y),tu)= \log\Big(\frac{M(f(y)/tu)}{m(f(y)/tu)}\Big)\leq \log\Big(\frac{M(y/tu)}{m(y/tu)}\Big)=\dist(y,tu)=\dist(y,u) \,.
\end{equation}
with the inequality that is strict if $\sigma$ is strictly subhomogeneous.

Then the thesis of the Lemma follows by applying \eqref{eq_1_LEMMA_NON_EXP} to \eqref{eq_2_LEMMA_NONEXP}.

\subsection{proof of \Cref{thm_collapse_in_hilbert_distance}}

We will prove that, as a consequence of the hypothesis $\lim_{l\rightarrow\infty}\max_{i}\dist (X_{:,i}^{(l)},u)=0$,

\begin{equation}\label{eq2_thm_subhom}
  \lim_{l\rightarrow \infty}\frac{\|(I-\proj)X^{(l)}\|_F}{\|\proj X^{(l)}\|_F}=0,
\end{equation}
where $\proj=uu^T/\|u\|_2^2$. Indeed \eqref{eq2_thm_subhom} is equivalent to proving that the numerical rank goes to 1 as $l$ goes to $\infty$:
\begin{equation}
    1\leq \Nrank(X^{(l)})\leq 1+\frac{\|(I-\proj)X^{(l)}\|_F^2}{\|X^{(l)}\|_2^2}\leq 1+\frac{\|(I-\proj)X^{(l)}\|_F^2}{\|\proj X^{(l)}\|_F^2}.
\end{equation} 
 %
%
To prove \eqref{eq2_thm_subhom}, we recall from Lemma 2.5.1 in \cite{lemmensNonlinearPerronFrobeniusTheory2012} that for any $w$ such that $u^\top w=c$ 
\begin{equation}
\|w-\proj w\|_u\leq \|\proj w\|_u (e^{d_T(w,\proj w)}-1),
\end{equation}
where $d_T(x,y)=\log(\max\{M(x/y), m(x/y)^{-1}\})$ and where since the dual cone of $\R^n_+$ is $\R^n_+$ itself, we are considering the norm induced by $u$ on the cone, i.e. $\|x\|_u=u^\top x$ for any $x$ in the cone. In practice the norm induce by $u$ $\|\cdot\|_u$ can be defines by the Minkowki functional of the set $\Omega=\text{ConvexHull}\{\{\Omega_1\}\cup\{-\Omega_1\}\}$ where $\Omega_1=\{x\in \cone \,\; u^\top x\leq 1\}$

Then since $\|\proj w\|_u=\|w\|_u=u^\top w$, we have that $ M(w/\proj w)\geq 1$ and $m(w/\proj w)\leq 1$. Thus $d_T(w,\proj w)\leq \dist(w,\proj w)$, yielding
\begin{equation}
    \|w-\proj w\|_u\leq \|\proj w\|_u (e^{\dist(w,\proj w)}-1).
\end{equation}

From the equivalence of the norms there exists some constant $c>0$ such that we can equivalently write 
\begin{equation}
    \|w-\proj w\|_2\leq C\|\proj w\|_2(e^{\dist(w,\proj w)}-1).
\end{equation}

Now recall that the squared frobenius norm of a matrix is the sum of the squared 2-norms of the its columns, so we can apply the last inequality to the matrix $X^{(l)}$ columnwise obtaining: 
\begin{equation}
    \|(I-\proj)X^{(l)}\|_F^2\leq C \|\proj X^{(l)}\|_F^2 \big(e^{\max_i\{ \dist(X^{(l)}_{:,i},\proj X^{(l)}_{:,i})\}}-1\big)^2.
\end{equation}
the proof is concluded using the hypothesis and observing that by the scale invariance property of the Hilbert distance $\dist(X^{(l)}_{:,i},\proj X^{(l)}_{:,i})=\dist(X^{(l)}_{:,i},u)$, yielding:
\begin{equation}
    1\leq \big(\Nrank(X^{(l)})\big)\leq 1+ \frac{\|(I-\proj)X^{(l)}\|_F^2}{\|\proj X^{(l)}\|_F^2}\leq     
    1+C \big(e^{\max_i\{ \dist (X^{(l)}_{:,i},u)\}}-1\big)^2
\end{equation}
   and concluding the proof.

\subsection{Some situations where $\lim_{l\rightarrow\infty}\max_{i}\dist (X_{:,i}^{(l)},u)=0$}\label{situations_where_rank_drops}

In this section we explore three different situations where the Hilbert distance of the features from the dominant eigenvector $u$ of the thea aggregation matrices is guaranteed to converge to $1$.

\paragraph{1st situation}

A first situation where the Hilbert distance of the features from the dominant eigenvector $u$ converges to zero is the case of all matrices $A^{(l)}$ are contractive.

\begin{lemma}
    Let $A^{(l)}$ be nonnegative and irreducible matrices with dominant eigenvector $u\in\cone$. Assume also $X^{(0)}$ to be strictly positive, $W^{(l)}$ nonnegative with $\min_{j}\max_i W_{ij}^{(l)}>0$ and $\sigma\in C(\mathbb{R}^{N}_+,\mathbb{R}^{N}_+)$ a nonlinear function that is order preserving, subhomogeneous and such that $u$ is also an eigenvector of $\sigma$. Any matrix $A^{(l)}$ is known to satisfy $\dist(A^{(l)}x,A^{(l)}y)\leq \dist(x,y)$ for all $x,y\in \cone$, then if $\lim_{k\rightarrow\infty}\prod_{l=1}^{k}\beta_l=0$
\begin{equation}\label{eq:contractivity_hilbert_2}
    \lim_{l\rightarrow\infty}\max_{i}\dist (X_{:,i}^{(l)},u)=0.
\end{equation}
    \end{lemma}
    \begin{proof}
        The proof is a trivial consequence of \cref{Lemma_hilb_contractivity_linear_network}. Indeed iterating the result in the thesis we know that

        \begin{equation}\label{eq:contractivity_hilbert_3}
          \max_{i}\dist \left(\sigma\big(X^{(l)}_{:,i}\big),u\right
          )\leq \prod_{i=1}^{l-1}\beta_i \max_{i}\dist (X_{:,i}^{(0)},u),
        \end{equation}
        concluding the proof
    \end{proof}

In particular we remind that from the Perron-Frobenius theory any strictly positive matrix $A$ is known to be contractive of a parameter $\beta<1$, see \cite{lemmensNonlinearPerronFrobeniusTheory2012}. 

\paragraph{2nd situation}

A second situation where the Hilbert distance of the features from the dominant eigenvector $u$ converges to zero is the case of a strictly subhomogeneous activation function.

\begin{lemma}
    Let $A^{(l)}$ be nonnegative and irreducible matrices with dominant eigenvector $u\in\cone$. Assume also $X^{(0)}$ to be strictly positive, $W^{(l)}$ nonnegative with $\min_{j}\max_i W_{ij}^{(l)}>0$ and $\sigma\in C(\mathbb{R}^{N}_+,\mathbb{R}^{N}_+)$ a nonlinear function that is order preserving, strongly subhomogeneous and such that $u$ is also an eigenvector of $\sigma$. Since $\sigma$ is strongly subhomogeneous, for any $l$ there exists $\beta_l<1$ such that $\max_i\dist(\sigma(A^{(l)}X^{(l)}W^{(l)})_{:,i},u)\leq \beta_l \dist((A^{(l-1)}X^{(l-1)}W^{(l-1)})_{:,i},u)$, then if $\lim_{k\rightarrow\infty}\prod_{l=1}^k\beta_l=0$
\begin{equation}\label{eq:contractivity_hilbert_2}
    \lim_{l\rightarrow\infty}\max_{i}\dist (X_{:,i}^{(l)},u)=0.
\end{equation}
    \end{lemma}
   \begin{proof}
        The proof is again a trivial consequence of \cref{Lemma_hilb_contractivity_linear_network}. Indeed iterating the result in the thesis we know that

        \begin{equation}\label{eq:contractivity_hilbert_3}
          \max_{i}\dist \left(\sigma\big(X^{(l)}_{:,i}\big),u\right
          )\leq \prod_{i=1}^{l-1}\beta_i \max_{i}\dist (X_{:,i}^{(0)},u),
        \end{equation}
        concluding the proof.
    \end{proof}

    Note that whenever we have a strongly concave activation function $\sigma$ on $\mathbb{R}_+$, e.g. tanh, then it is strongly subhomogeneous and so the result above applies.

\paragraph{3rd situation}

Here we discuss a third situation, possibly weaker than the previous ones. This is essentially an adaptation of the ergodic theorem proved in \cite{Nussabaum_ergodic}.
Consider the cone $\cone'=\R_{+}^{N\times d}$ and introduce the set $\Pi=\{ Y\in \cone' \text{ s.t. } Y_{:,i}=\alpha_i u \;\forall i=1,\dots, d \}$. Moreover let $\psi \in\interior(\cone')$ and define $\Sigma_\psi=\{Y\in \cone' \text{ s.t. } \psi(Y)=1\}$. Finally given $X\in \cone'$ define 
\begin{equation}
    \pi(X):=\argmin_{Y\in \Sigma_\psi}\dist(X,Y).
\end{equation}

\begin{definition}[Hypotheses H] 
We say that a GNN $X^{(l+1)}=f^{(l)}(X^{(l)})=\sigma(A^{(l)}X^{(l)}W^{(l)})$
with $X^{(0)}\in\cone'$ satisfies hypotheses $[H]$ if the following conditions are verified:

\begin{enumerate}
    \item[H1] $A^{(l)}$ is nonnegative and $\min_{j}\max_i W^{(l)}_{ij}>0$ for any $l$.
    \item[H2] The function $\sigma$ is subhomogeneous and differentiable in $\R_+$.
    \item[H3] $u\in\cone$ is the dominant eigenvector of all of the matrices $A^{(l)}$ and it is also an eigenvector of $\sigma$.
    \item[H4] There exists an integer $p>0$ and a sequence of $dN\times dN$ strictly positive matrices $\{B^{(l)}\}$ such that $\forall X$ with $m\big(X^{(lp)},\pi(X^{(lp)})\big)\pi(X^{(lp)})\leq X\leq M\big(X^{(lp)},\pi(X^{(lp)})\big)\pi(X^{(lp)})$ 
    $$\partial_X g^{(l)}(X)\geq B^{(l)} \quad \forall l\geq 1 $$
    where $g^{(l)}(X):=f^{((l+1)p-1)}\circ \dots \circ f^{(lp)}(X)$.
    \item[H5] $\forall l\geq 0$ there exists $\eta^{(l)}>0$ s.t. $B^{(l)}\pi(X^{(lp)})\geq \eta^{(l)}g^{(l)}\big(\pi(X^{(lp)})\big)$.
    \item[H6] $\lim_{M\rightarrow \infty}\sum_{l=0}^M \eta^{(l)}exp\big(-\Delta(B^{(l)})\big)=\infty$ where $\Delta(B)=\sup_{x,y\in\cone'}\dist(Bx,By)<\infty$ is the projective diameter of the matrix $B$.
\end{enumerate}

\end{definition}

\begin{theorem}
 Let $X^{(l+1)}=f^{(l)}(X^{(l)})=\sigma(A^{(l)}X^{(l)}W^{(l)})$,
with $X^{(0)}\in\cone'$, be a GNN satisfying hypotheses $[H]$. Then 
$$\lim_{l\rightarrow \infty}\max_i\big(\dist(X^{(l)}_{:,i},u)\big)=0.$$
\end{theorem}
\begin{proof}
    We claim that under hypotheses [H]
    \begin{equation}
        \lim_{l\rightarrow \infty}\dist(X^{(l)},\Pi)= 0. 
    \end{equation}
    As a consequence of this it is easy to note that $\lim_{l\rightarrow \infty}\max_i\big(\dist(X^{(l)}_{:,i},u)\big)=0$. Indeed it is not difficult to check that 
    \begin{equation}\label{eq.2_ergodic_thm}
        \begin{aligned}
        exp\Big(\dist\big(X^{(l)},\pi\big(X^{(l)}\big)\Big)&=\frac{ M(X^{(l)},\pi\big(X^{(l)}\big)}{m(X^{(l)},\pi\big(X^{(l)}\big)}=\max_{ji}\max_{hk}\frac{X^{(l)}_{ji} \pi\big(X^{(l)}\big)_{hk} }{X^{(l)}_{hk} \pi\big(X^{(l)}\big)_{ij}}\geq \\
        &\geq \max_i \max_{j}\max_{h}\frac{X^{(l)}_{ji} \pi\big(X^{(l)}\big)_{hi} }{X^{(l)}_{hi} \pi\big(X^{(l)}\big)_{ji}}=\max_{i} exp\Big(\dist(X^{(l)}_{:,i},u)\Big).
        \end{aligned}
    \end{equation}
    where we have used that by definition
$\pi\big(X^{(l)}\big)_{:,i}=\alpha_i u$ for all $i$, where necessarily $\alpha_i> 0$ for all $i$. Otherwise, since $X^{(l)}\in \cone'$, we would have $\dist\big(X^{(l)},\pi\big(X^{(l)})\big)=\infty$, against the minimality. 

Next we prove the claim. The proof is adapted for our scopes from the proof of the weak ergodic theorem 2.1 proved in \cite{Nussabaum_ergodic} for homogeneous mappings. To simplify the notation we denote by $X:=X^{(lp)}$, $\pi:=\pi(X^{(lp)})$, $g:=g^{(l)}$, $m:=m(X^{(lp)},\pi(X^{(lp)}))$, $B:=B^{(l)}$, $\eta:=\eta^{(l)}$ and $M:=M(X^{(lp)},\pi(X^{(lp)}))$. Then consider $z^1(t)=(1-t) m\pi+t X$ and $z^2(t)=(1-t) X+t M\pi$.
Then from hypothesis [H4] we have the following inequalities:
\begin{equation}
    \begin{aligned}
        g(X)-g(m\pi)=\int_{0}^1 \partial_Xg(z^1(t))\big(X-m\pi\big)\geq B\big(X-m\pi\big)\\
        g(M\pi)-g(x)=\int_{0}^1 \partial_Xg(z^2(t))\big(M\pi-X\big)\geq B\big(M\pi-X\big).
    \end{aligned}
\end{equation}
In particular, since $g$ is subhomogeneous and by definition of $\pi$ $m\leq 1$ and $M\geq 1$ we have: 
\begin{equation}\label{eq.1_ergodic}
\begin{aligned}
  mg(\pi)+B(X-m\pi)  &\leq g(m\pi)+B(X-m\pi) \leq  g(X)\leq \\
  &\leq g(M\pi)-B(M\pi-X)\leq Mg(\pi)-B(M\pi-X) 
\end{aligned}
\end{equation}
Since $B(X-m\pi)+B(M\pi-X)=(M-m)B\pi$, we can use Lemma 2.2 in \cite{Nussabaum_ergodic} we know that 
\begin{equation}
    B(X-m\pi)\geq \gamma (M-m)B\pi \qquad \text{or} \qquad B(M\pi-X)\geq \gamma(M-m)B\pi,
\end{equation}
where $\gamma=(1/2)exp(-\Delta(B))$. Without loss of generality assume that $B(X-m\pi)\geq \gamma (M-m)B\pi$ the second case can be handled analogously. Then, since $B(M\pi-X)$, from \eqref{eq.1_ergodic}, we have 
\begin{equation}\label{eq.1_ergodic}
\begin{aligned}
  mg(\pi)+\gamma \eta (M-m)g(\pi)  &\leq mg(\pi)+\gamma (M-m)B\pi \leq  g(X)\leq  Mg(\pi).
\end{aligned}
\end{equation}

In particular 
\begin{equation}
\begin{aligned}
    \dist(g(X),g(\pi))\leq \log(\frac{M}{m+\eta\gamma(M-m})&=\log\Big(\frac{M}{m+\eta\gamma(M-m)}\Big)=\\
    &=\log(M/m)\frac{\log\Big(\frac{M}{m}\frac{1}{1+\eta\gamma(M/m-1)}\Big)}{\log(M/m)}\leq\\
    &\leq \log(M/m)(1-\eta\gamma)=(1-\eta\gamma)\dist(X,\pi).
\end{aligned}
\end{equation}
In the last we have used the following fact: if $\xi_1(s)=\log\big(s(1+\eta\gamma(s-1))^{-1}\big)$ and $\xi_2(s)=\log(s)$ with $s>1$ then since $\xi_{1,2}(1)=0$ and $1+\eta\gamma(s'-1)>1$ for $s'>1$, for any $s>1$ there exists some $s'>1$ such that 
\begin{equation}
    \xi_1(s)/\xi_2(s)=\xi'_1(s')/\xi'_2(s')=(1-\eta\gamma)(1+\eta\gamma(s'-1))^{-1}\leq (1-\eta\gamma). 
\end{equation}

Now note that $\dist(X,\pi)=\dist(X,\Pi)$ by the minimality of $\pi$ and the fact that $\dist(X,\alpha Y)=\dist(X,Y)$ for any $\alpha>0$.
Second, using [H1] and [H3] it is very easy to observe that $g(\pi)\in \Pi$. So $\dist(g(X),\Pi)\leq \dist(g(X),g(\pi))$. In conclusion we have proved that 
\begin{equation}
    \dist\big(X^{(lp)}, \Pi\big)=\dist\big(g^{(l)}(X^{(l)}\big),\Pi)\leq \left(1-\eta^{(l)}\frac{exp\big(-\Delta(B^{(l)}\big)}{2}\right)\dist(X^{(l)},\Pi).
\end{equation}
In particular iterating, using \eqref{eq.2_ergodic_thm} and recalling that  $\max_i\dist(X^{(l_1)}_i,u)\leq  \max_i\dist(X^{(l_2)}_i,u)$ if $l_2 >l_1$ (see \cref{Lemma_hilb_contractivity_linear_network}) we have that for any $L>lp$:
\begin{equation}
\begin{aligned}
    \max_i\dist(X^{(L)}_i,u)&\leq \max_i\dist(X^{(lp)}_i,u)\leq \dist\big(X^{(lp)}_i, \Pi\big)\leq \\&\leq \prod_{j=0}^l\left(1-\eta^{(j)}\frac{exp\big(-\Delta(B^{(j)}\big)}{2}\right)\dist(X^{(0)},\Pi).
\end{aligned}
\end{equation}
Moreover, we have that 

\begin{equation}
\begin{aligned}
\lim_{l\rightarrow \infty}\prod_{j=0}^l\left(1-\eta^{(j)}\frac{exp\big(-\Delta(B^{(j)}\big)}{2}\right)&=0 \iff\\
\lim_{l\rightarrow \infty}\sum_{j=0}^l -\log\left(1-\eta^{(j)}\frac{exp\big(-\Delta(B^{(j)}\big)}{2}\right)&=\infty \iff \\
\lim_{l\rightarrow \infty}\sum_{j=0}^l \left(\eta^{(j)}\frac{exp\big(-\Delta(B^{(j)}\big)}{2}\right)&=\infty,
\end{aligned}
\end{equation}
which concludes the proof.

\end{proof}

Next, we discuss briefly the hypothesis H4 which is the most technical one. In particular, such hypothesis recalls the property of a primitive matrix. Actually we note that, if the matrices $A^{(l)}$ are primitive, under few additional mild assumptions, it is possible to prove that the hypothesis H4 is always satisfied. Assume that there exists an index $p$ such that:
\begin{enumerate}
    \item  The matrices $A^{(l)}$ with $l=0,\dots, p-1$ satisfy $A^{(l)}\geq \alpha_1 A^*$ where $A^*$ is the unweighted adjacency matrix of the graph; i.e. $A^*_{ij}=1$ if the edge $(i,j)$ belongs to the graph, $A^*_{ij}=0$ otherwise.
    And assume that $A^*$ is primitive of index smaller then $p$, i.e. $\big((A^*)^p\big)_{ij}>1$ for all $i,j$
    \item $\partial_x\sigma(X_{ij})\geq \alpha_2$ for all $X$ with $X=f^{(l)}\circ\dots \circ f^{(0)}(X^*)$ varying $l=0,\dots, p-1$ and $X^*$ among the matrices that satisfy $m\big(X^{(0)},\pi(X^{(0)}\big)\pi(X^{(0)})\leq X\leq M\big(X^{(0)},\pi(X^{(0)})\big)\pi(X^{(0)})$.
    \item $\big(W^{(0)}\cdot \dots \cdot W^{(p-1)}\big)_{ij}>\alpha_3>0$ for all $i,j$.
\end{enumerate}
Then for any matrix $X$ satisfying $m\big(X^{(0)},\pi(X^{(0)})\big)\pi(X^{(0)})\leq X\leq M\big(X^{(0)},\pi(X^{(0)})\big)\pi(X^{(0)})$
we have
$$\Big(\partial_X g(X)\Big)_{ij}\geq \alpha_1^p\alpha_2^p\alpha_3 \quad \forall ij $$
    where $g(X):=f^{(p-1)}\circ \dots \circ f^{(0)}(X)$. 
    
    Indeed, vectorizing the matrix $X$ we can write 
    $\partial_X f^{(i)}(X)= \mathrm{diag}\Big(\sigma'\big((W^T \otimes A)vec(X)\big)\Big)(W^T \otimes A)\geq \alpha_2(W^T \otimes A).$ Then we can apply the chain rule to the function $g$ to get 
    \begin{equation}
        \Big(\partial_X g(X^*)\Big)\geq \alpha_2^{p} \big({W^{(p-1)}}^T \otimes A^{(p-1)}\big)\dots \big({W^{(0)}}^T \otimes A^{(0)}\big).
    \end{equation}
    Finally we use the properties of the matrices $W^{(i)}$ and $A^{(i)}$ and the properties of the Kronecker product to obtain 
        \begin{equation}
        \Big(\partial_X g(X^*)\Big)\geq \alpha_2^{p} \big({W^{(p-1)}}^T\dots {W^{(0)}}^T\big) \otimes \big(A^{(p-1)}\dots A^{(0)}\big)\geq \alpha_2^p\alpha_1^p \alpha_3 \mathds{1}
    \end{equation}
where $\mathds{1}$ is the matrix with every entry equal to $1$.

\subsection{Eigenpairs of entry-wise subhomogeneous maps}
We conclude with a formal investigation of the eigenpairs of activation functions that are entry-wise subhomogeneous.
Let $\sigma=\otimes^N \psi$ with $\psi\in C(\R,\R)$ that is subhomogeneous on $\R_+$. Then one can easily show that $\sigma$ is itself subhomogeneous on $\R^N_+$. We have the following result,
\begin{proposition}\label{Lemma_eigenvectors_of_homogeneous_functions}
    Let $\sigma=\otimes^N \psi$  with $\psi\in C(\R_+,\R_+)$ be order preserving. Then: 1) If $\sigma$ is homogeneous, any positive vector is an eigenvector of $\sigma$, 2) If $\sigma$ is strictly subhomogeneous,  the only eigenvector of $\sigma$ in $\cone$ is the constant vector.   
\end{proposition}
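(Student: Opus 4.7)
The plan is to unpack the eigenvector condition componentwise and then use strict subhomogeneity of $\sigma$ to force constancy. Since $\sigma=\otimes^N\psi$ acts entrywise, saying that $u\in\cone$ is an eigenvector of $\sigma$ is equivalent to: for every $t>0$ there exists a scalar $\mu_t$ such that $\psi(tu_i)=\mu_t u_i$ for all $i$, or equivalently, the ratio $\psi(tu_i)/u_i$ is independent of $i$.

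For part (1), I would use homogeneity of $\sigma$ to deduce that $\psi$ is linear. Indeed, the componentwise form of homogeneity gives $\psi(\lambda s)=\lambda\psi(s)$ for all $s\geq 0$ and $\lambda\geq 0$; setting $s=1$ yields $\psi(\lambda)=c\lambda$ with $c=\psi(1)$. Then for any $u\in\cone$ and $t>0$, $\sigma(tu)=ctu=(ct)u$, so $u$ is an eigenvector with $\mu_t=ct$.

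For part (2), I would argue by contradiction: suppose $u\in\cone$ is an eigenvector but not constant. Then there exist indices $i,j$ with $u_j>u_i$; set $\lambda=u_j/u_i>1$. The equality of ratios gives, for every $t>0$,
\[
\frac{\psi(tu_i)}{u_i}=\frac{\psi(tu_j)}{u_j}=\frac{\psi(\lambda \, t u_i)}{\lambda u_i},
\]
which rearranges to $\psi(\lambda s)=\lambda\psi(s)$ for every $s>0$ (letting $s=tu_i$ range over $\R_+$). But strict subhomogeneity of $\sigma$ applied to the constant vector $s\mathbbm{1}\in\cone$ forces the strict inequality $\psi(\lambda s)<\lambda\psi(s)$ for this same $\lambda>1$, a contradiction. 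Conversely, any constant vector $u=c\mathbbm{1}$ is an eigenvector, since $\sigma(tu)=\psi(tc)\mathbbm{1}=(\psi(tc)/c)\,u$.

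The only real content is the one-line observation that the eigenvector condition forces the scalar identity $\psi(\lambda s)=\lambda\psi(s)$ at some $\lambda>1$, which is precisely what strict subhomogeneity rules out. No serious obstacle is anticipated; the main care is simply in rewriting the eigenvector relation as an equality of ratios and in invoking strict subhomogeneity componentwise on constant vectors rather than on $u$ itself.
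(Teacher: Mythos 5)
Your proof is correct and takes essentially the same approach as the paper: unpack the eigenvector relation componentwise, fix two distinct entries $u_i\neq u_j$, and derive a scalar identity $\psi(\lambda s)=\lambda\psi(s)$ for some $\lambda>1$, which contradicts strict subhomogeneity. The only (harmless) difference is that you invoke the eigenvector condition for all $t>0$ to get the functional equation on all of $\R_+$, whereas the paper uses only $t=1$ and reaches the contradiction directly at the single pair of values $u_i, u_j$.
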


We start from the homogenous case. Note that since $\psi$ is homogeneous we have that necessarily $f(t)=c t$ for all $t,c\geq 0$, this in particular means that every $u\in \R^N_+$ is an eigenvector of $\sigma$ with corresponding eigenvalue $\lambda_1=c$.

Then we can consider the subhomogeneous case.
Assume that we have $u\in \R_+^N$ that is an eigenvector of $\sigma$ with eigenvalue $\lambda$ and $u_i>0$ for all $i$, then
\begin{equation}
    \psi(u_i)=\lambda u_i \qquad \forall i=1,\dots,N.
\end{equation}
By strict subhomogeneity this means that necessarily $u$ is constant, indeed
if $u_i>u_j>0$ then 
\begin{equation}
   \lambda u_j= \psi(u_j)=\psi\Big(u_j\frac{u_i}{u_i}\Big)>\frac{u_j}{u_i} \psi(u_i)=\lambda u_j,
\end{equation}
yielding a contradiction.
In particular any constant vector $u$ in $\R_+^N$ is easily proved to be an eigenvector of $\sigma$ relative to the eigenvalue $\lambda=\|\sigma(u)\|_1/\|u\|_1=\psi(u_i)/u_i$ where $u_i$ is any entry of $u$.

\clearpage

\section{Limitations}\label{app:limitations}
Although a small effective rank or numerical rank indicates oversmoothing and can be subsequently linked to the underperformance of GNNs, a large effective rank or numerical rank does not necessarily correspond to a good network performance. Prior study has suggested some degree of smoothing can be beneficial \cite{kerivenNotTooLittle2022}, and as an extreme example, features sampled from a uniform distribution and with randomly assigned labels would almost surely have a large effective rank, but they cannot be classified accurately due to the lack of any underlying pattern.  We note that this limitation is not specific to the two relaxed rank measures, as the same argument is directly applicable to all other oversmoothing metrics.

Consequently, as shown in \cref{tab:architecture_coeff}, when additional components are used to (partially) alleviate oversmoothing, particularly when residual connections are used, the accuracy ratio may remain large over the layers, and all oversmoothing metrics correlate poorly with the accuracy of GNNs. This in turn suggests these oversmoothing metrics become less informative as the oversmoothing problem is mitigated or alleviated.

\section{Computational Complexity Analysis}
Let $N$, $D$, $E$ the number of nodes, features and edges of a graph $\mathcal G$, the computational cost of the Dirichlet energy is $O(E \times D)$, while the cost of the Projection Energy is $O(N \times D)$. In contrast, most of the computational cost for the numerical rank and the effective rank is given by the computation of the spectral radius of $X$, and of all the singular values of $X$ for the effective rank, respectively. 

Standard results show that it is always possible to compute the full singular value decomposition (SVD), and subsequently the spectral radius, in $O(N \times D \times \min\{N,D\})$. 
However, in typical cases, the latter cost can be drastically reduced using different strategies. Firstly, in the case of numerical rank, the computational cost of the 2-norm is generally much smaller than the cost of the full SVD. Indeed, using Lanczos or power methods to compute it, the cost scales as $O(N \times D)$, and the methods converge typically very fast. 
In addition, both the effective rank and the numerical rank can be efficiently controlled by computing only the $k$-largest singular values of $X$. In particular, a truncated SVD containing the $k$-largest singular values can be computed in $O(N \times D \times k)$ using either deterministic algorithms or randomized SVD methods.

In general, the computational cost of metrics to quantify oversmoothing is marginal compared to the cost of training. In production, measuring the emergence of oversmoothing is done by training the model and checking the performance on the validation and test sets as the number of layers changes. The cost of additionally computing effective or numerical ranks is marginal. 
Moreover, we note that computing the metrics on a subsection of a graph can be sufficiently informative, and as a consequence, most oversmoothing metrics studied in this paper can be computed in less than $10~ms$ for a graph (or a subsection of it) with less than 2000 nodes.

\clearpage
\section{Additional Experimental Results with Synthetic Weights} \label{app:synthetic_results}

In this section, we conduct an asymptotic ablation study using randomly sampled (synthetic) untrained weights. The aim of these experiments is twofold:
\begin{itemize}[topsep=0pt, leftmargin=*,itemsep=0pt]
    \item to demonstrate that similar untrained asymptotic experiments are inherently unrealistic, despite being extensively used in the literature \cite{wangACMPAllenCahnMessage2022,ruschGraphCoupledOscillatorNetworks2022,ruschGradientGatingDeep2023,wuDemystifyingOversmoothingAttentionbased2023,rothSimplifyingTheoryOversmoothing2024,wangUnderstandingOversmoothingGNNs2025}, as they fail to reliably capture oversmoothing in shallower GNNs where the performance degradation occurs. 
    \item to examine the convergence properties of different oversmoothing metrics with weight size control and to empirically validate \cref{thm:gcn_numrank_1,thm_collapse_in_hilbert_distance}.
\end{itemize}

We construct a 10-node Barabasi-Albert graph with each node having 32 features. The weights are either an identity matrix or randomly sampled at each layer from a uniform distribution $\mathcal U(0, s)$, where $s$ depends on the settings: small weights ($s=0.05$) lead to an exponentially decaying $\|X^{(l)}\|_F$, and large weights ($s=0.1$) lead to an exploding $\|X^{(l)}\|_F$ for uncapped activation functions. For identity weights, $\|X^{(l)}\|_F$ is roughly constant (LReLU) or slowly decaying (Tanh). The feature initialization $X^{(0)}$ is sampled from $\mathcal U(0, 1)$, and is iterated over 300 layers.

In this asymptotic synthetic setting, as presented in \cref{tab:synth_net}, the normalized $\EDir$ and $\EProj$ exhibit decay patterns similar to those of the effective rank and numerical rank, suggesting these metrics are equally sensitive to asymptotic rank collapse. However, this behaviour stands in stark contrast to results on trained networks presented in \cref{tab:real_nets,tab:datasets_coeff} and \cref{apd:additional_empirical_results}, where the normalized $\EDir$ and $\EProj$ often fail to detect oversmoothing. 

Moreover, these asymptotic results validate \cref{thm:gcn_numrank_1,thm_collapse_in_hilbert_distance}, showing that the numerical rank converges to one for GCN + LReLU and GAT + any subhomogeneous activation functions. Without making any additional assumption on the normalization of the adjacency matrix, the effective rank and numerical rank do not generally decay to one when subhomogeneous activation functions, e.g. Tanh, are used in GCNs. 

\begin{table}[h!]
    \centering
    \setlength{\tabcolsep}{0.5mm}
            \begin{tabular}{l c c c c c c c }
        \toprule
        \multirow{2}{*}{Architecture}& \multicolumn{2}{c}{$\EDir$} & \multicolumn{2}{c}{$\EProj$} & \multirow{2}{*}{MAD} & \multirow{2}{*}{$\text{Erank}$}  & \multirow{2}{*}{$\text{NumRank}$}\\
        \cmidrule(lr){2-3} \cmidrule(lr){4-5}
        & \footnotesize{standard} & \footnotesize{normalized} & \footnotesize{standard} & \footnotesize{normalized} & &\\ 
        \midrule
        GCN+LReLU+identity weights & \cmark & \cmark & \cmark & \cmark & \cmark & \cmark & \cmark \\
        GCN+Tanh+identity weights & \cmark & \cmark & \cmark & \cmark & \xmark& \cmark & \cmark \\
        GAT+LReLU+identity weights & \cmark & \cmark & \cmark & \cmark & \cmark & \cmark & \cmark \\
        GAT+Tanh+identity weights & \cmark & \cmark & \cmark & \cmark & \xmark& \cmark & \cmark \\

        \midrule
        
        GCN+LReLU+small weights & \cmark & \cmark & \cmark & \cmark&\xmark & \cmark & \cmark \\
        GCN+Tanh+small weights & \cmark & \cmark & \cmark & \cmark&\xmark & \cmark & \cmark \\
        GAT+LReLU+small weights & \cmark & \cmark & \cmark & \cmark&\xmark & \cmark & \cmark \\
        GAT+Tanh+small weights & \cmark & \cmark & \cmark & \cmark&\xmark & \cmark & \cmark \\

        \midrule
        
        GCN+LReLU+large weights & \xmark & \cmark & \xmark & \cmark &\cmark & \cmark & \cmark \\
        GCN+Tanh+large weights & \xmark & \xmark & \xmark & \xmark &\xmark & \xmark & \xmark \\
        GAT+LReLU+large weights & \xmark & \cmark & \xmark & \cmark &\cmark & \cmark & \cmark \\
        GAT+Tanh+large weights & \cmark & \cmark & \cmark & \cmark &\cmark & \cmark & \cmark \\
        \bottomrule
        \end{tabular}
    \caption{Additional results on very deep (300 layers) synthetic networks with randomly sampled weights. For Erank and NumRank, we subtract 1 so that both metrics converge to zero. \cmark~indicates a decay of the corresponding metric to zero, \xmark~indicates otherwise. Note that GAT has similar asymptotic behaviour to GCN with adjacency normalization $D^{-1}\tilde A$.
    }
    \label{tab:synth_net}
\end{table}

\clearpage
\section{Additional Experimental Results on Activation Functions and Different Datasets} \label{app:dataset_results}
We extend \cref{tab:real_nets} to subhomogenous Tanh activation function and a few additional datasets. The experimental setting is consistent with that of \cref{sec:experiments}.
\begin{table}[h!]
    \centering
    \setlength{\tabcolsep}{0.08mm}
        \begin{tabular}{l l c c c c c c c c}
        \toprule
        \multirow{2}{*}{Dataset}& \multirow{2}{*}{Architecture}& \multicolumn{2}{c}{$\EDir$} & \multicolumn{2}{c}{$\EProj$} & \multirow{2}{*}{MAD} & \multirow{2}{*}{$\text{Erank}$}  & \multirow{2}{*}{$\text{NumRank}$} & \multirow{2}{*}{\makecell{Accuracy\\ratio}} \\
        \cmidrule(lr){3-4} \cmidrule(lr){5-6}
        & & \footnotesize{Standard} & \footnotesize{Normalized} & \footnotesize{Standard} & \footnotesize{Normalized} & & &\\ 
        \midrule
        \multirow{4}{*}{Cora}& GCN+LReLU & -0.7871 & 0.6644 & -0.8106 & -0.8309 & -0.2460 & \textbf{0.9724} & 0.5885 &  0.2693 \\ 
        & GCN+Tanh & 0.5243 & 0.9403 & 0.8610 & 0.9768 & 0.9734 & \textbf{0.9923} & 0.9784 &  0.1937 \\ 
        & GAT+LReLU & -0.9189 & 0.6703 & -0.9469 & -0.6054 & 0.8251 & \textbf{0.9722} & 0.7612 & 0.2493 \\ 
        & GAT+Tanh & 0.8300 & 0.8501 & 0.8676 & 0.9066 & 0.8603 & \textbf{0.9600} & 0.9515 &  0.1900 \\ 
        \midrule
        \multirow{4}{*}{Citeseer} & GCN+LReLU & -0.8442 & 0.4350 & -0.8913 & -0.8667 & -0.7169 & \textbf{0.9700} & 0.6795 &0.4380 \\ 
        & GCN+Tanh & 0.3420 & 0.8957 &  0.4631 & 0.9045 & 0.9605 & \textbf{0.9906} & 0.9457 & 0.3509\\
        & GAT+LReLU &  -0.9576 & 0.0664 & -0.9585 & -0.9080 & 0.3722 & \textbf{0.9915} & 0.8047 & 0.4672\\
        & GAT+Tanh & 0.9234 & 0.8949 & 0.8276 & 0.8997 & 0.9176 & \textbf{0.9287} & 0.9024 & 0.3045\\
        \midrule
        \multirow{4}{*}{Pubmed} & GCN+LReLU & -0.9068 & 0.7006 & -0.8508 &  -0.1109 & 0.6205 & \textbf{0.9464} & 0.9268 & 0.5225 \\ 
        & GCN+Tanh & -0.2330 & 0.2657 & 0.2029 &  0.9137 & 0.8745 & 0.9328 & \textbf{0.9940} & 0.3883 \\ 
        & GAT+LReLU & -0.8735 & -0.3684 & -0.8541 & -0.4102 & -0.3932 & 0.9270 & \textbf{0.9721} & 0.5564\\
        & GAT+Tanh & 0.2977 & 0.8411 & 0.7160 & \textbf{0.9331} & 0.8546 & 0.9303 & 0.8551 & 0.4464\\
        \midrule
        \multirow{4}{*}{Squirrel} & GCN+LReLU & -0.7774&0.4171&-0.7602&-0.3258&-0.8247&0.6316&\textbf{0.9582} &0.8457\\ 
        & GCN+Tanh & 0.6026&0.7736&0.1689&0.9377&0.7727&0.9680&\textbf{0.9837} &0.8152\\
        & GAT+LReLU & -0.6864 & -0.5503 & -0.7364 & -0.7253 & 0.5002 & \textbf{0.8538} & 0.6840 & 0.7533\\
        & GAT+Tanh & -0.3606 & -0.6557 & -0.0363 & 0.8714 & -0.7033 & \textbf{0.8911} & 0.8861 & 0.9103 \\ 
        \midrule
        \multirow{4}{*}{Chameleon} & GCN+LReLU & -0.9223&0.1504&-0.9163&-0.8201&-0.8809&\textbf{0.9387}&0.9014&0.6195\\ 
        & GCN+Tanh & 0.2742&0.8796&-0.2541&0.8492&0.9272&\textbf{0.9869}&0.9841&0.7093\\ 
        & GAT+LReLU & -0.8721 & 0.1942 & -0.9089 & -0.8234 & 0.2803 & \textbf{0.9446} & 0.8799 &  0.6332\\
        & GAT+Tanh & 0.4090 & 0.5699 & 0.0743 & 0.8230 & 0.6006 & \textbf{0.9613} & 0.9143 &  0.7052\\
        \midrule
        \multirow{4}{*}{\makecell{Amazon\\Ratings}} & GCN+LReLU & -0.9297&0.8809&-0.9079&-0.3423&0.9201&\textbf{0.9301}&0.8049 & 0.8562 \\ 
        & GCN+Tanh & -0.6960&-0.6289&-0.6910&0.8576&0.9423&\textbf{0.9871}&0.9327 & 0.8574\\ 
        & GAT+LReLU & -0.9388 & 0.5277 & -0.9089 & -0.1617 & 0.6545 & \textbf{0.9248} & 0.8764 & 0.8384 \\
        & GAT+Tanh & 0.8354 & 0.8507 & -0.6595 & 0.9245 & \textbf{0.9418} & 0.8954 & 0.8883 &  0.8382\\
        \midrule
        \multirow{4}{*}{\makecell{Roman\\Empire}} & GCN+LReLU & -0.5635 & \textbf{0.7703} &  -0.6772 &  0.2575 & 0.6420 & 0.5833 & 0.5368  & 0.3891 \\ 
        & GCN+Tanh & 0.8018 & \textbf{0.9225} & -0.6808 & 0.8359 & 0.8124 & 0.8570 & 0.8090 & 0.4067 \\ 
        & GAT+LReLU & -0.9174 & 0.5390 & -0.9407 & 0.1868 & 0.7582 &  0.7221 & \textbf{0.8722}  & 0.3705\\
        & GAT+Tanh & 0.5767 & 0.5844 & -0.4716 & \textbf{0.8819} & 0.7263 & 0.7332 & 0.8589 & 0.3652\\
        \midrule
        \multirow{4}{*}{OGB-Arxiv} & GCN+LReLU & 0.7738&0.9194&0.5740&-0.2738&0.2822&\textbf{0.9682}&0.9091 & 0.0957 \\ 
        & GCN+Tanh & -0.6409&0.7041&-0.6808&0.9494&0.9487&\textbf{0.9900}&0.9876 & 0.1204\\ 
        & GAT+LReLU &  -0.4097 &  0.9439 & -0.7230 & 0.8985 & 0.8492 &  0.7740 &  \textbf{0.9781} &  0.2310\\
        & GAT+Tanh & 0.7834 & 0.7896 & -0.9277 & \textbf{0.9393} & 0.8222 & 0.7671 & 0.7861 & 0.2376\\
        \midrule
        \multicolumn{2}{l}{Average correlation} & -0.1956&0.5137&-0.3886&0.2669&0.4960&\textbf{0.9007}&0.8684\\
         \bottomrule

    \end{tabular}
    \caption{Additional correlation coefficient results on homophilic (Cora, Citeseer, Pubmed), heterophilic (Squirrel, Chameleon, Amazon Ratings, Roman Empire) and large-scale (OGB-Arxiv) dataset.
    }
    \label{tab:datasets_coeff}
\end{table}

\clearpage

\section{Additional Experimental Results with Different Network Components} \label{app:component_results}
Prior literature indicates that when adding additional components, such as bias \cite{ruschSurveyOversmoothingGraph2023} or residual terms \cite{scholkemperResidualConnectionsNormalization2024}, the Dirichlet energy does not decay. Therefore, we compare the correlation coefficients in \cref{tab:architecture_coeff} between existing oversmoothing metrics when additional components are added, such as bias, LayerNorm, BatchNorm, PairNorm \cite{zhaoPairNormTacklingOversmoothing2019}, DropEdge \cite{rongDropEdgeDeepGraph2019,huangTacklingOverSmoothingGeneral2020} and residual connections \cite{scholkemperResidualConnectionsNormalization2024}. 

All experiments follow the setup described in \cref{sec:experiments}. 
In addition, DropEdge has a probability of 0.5 in removing each edge at each layer. The residual connection is implemented as follows 
$$
X^{(l+1)} = \sigma(AX^{(l)}W_1^{(l)}) + X^{(0)}W_2^{(l)}.
$$

\cref{tab:architecture_coeff} demonstrates that both the effective rank and numerical rank achieve a higher average correlation with the classification accuracy than alternative metrics. This finding confirms their superior consistency in detecting oversmoothing across a variety of architectural variants.

Furthermore, we note that when oversmoothing is effectively alleviated, e.g. when residual connection is used, all metrics have a poor correlation with the classification accuracy. This generic limitation is discussed in \cref{app:limitations}.

\begin{table}[h!]
    \centering
    \setlength{\tabcolsep}{0.mm}
        \begin{tabular}{l c c c c c c c c}
        \toprule
        \multirow{2}{*}{Architecture}& \multicolumn{2}{c}{$\EDir$} & \multicolumn{2}{c}{$\EProj$} & \multirow{2}{*}{MAD} & \multirow{2}{*}{$\text{Erank}$}  & \multirow{2}{*}{$\text{NumRank}$} & \multirow{2}{*}{\makecell{Accuracy\\ratio}}\\
        \cmidrule(lr){2-3} \cmidrule(lr){4-5}
        & \footnotesize{Standard} & \footnotesize{Normalized} & \footnotesize{Standard} & \footnotesize{Normalized} & & &\\ 
        \midrule
        GCN+LReLU+Bias & -0.9300 & 0.7505 & -0.9414 & -0.3642 & -0.2828 & \textbf{0.9847} & 0.7833 & 0.2110 \\ 
        GCN+Tanh+Bias & 0.4086 & 0.8950 & 0.7479 & 0.9026 & 0.8996 & \textbf{0.9926} & 0.9814 & 0.2133\\ 
        GCN+LReLU+LayerNorm & -0.9132 & 0.8445 & -0.9424 & 0.5591 & \textbf{0.9753} & 0.9736 & 0.9736 & 0.4882 \\ 
        GCN+Tanh+LayerNorm & -0.7119 &  0.2560 & -0.2069 & 0.9716 & 0.8695 & 0.9576 & \textbf{0.9684} & 0.1886 \\ 
        GCN+LReLU+BatchNorm & 0.8789& 0.7300& \textbf{0.8872}& 0.7175&  0.5094&  0.6005&  0.6577&  0.8761 \\ 
        GCN+Tanh+BatchNorm & -0.6033&  0.2883& -0.6587&  \textbf{0.7717}&  0.4984&  0.7377&  0.7038&  0.8157 \\ 
        GCN+LReLU+PairNorm & -0.8165 &  0.3731 & -0.8106 & 0.3885 & 0.4850 & 0.5597 & \textbf{0.6244} & 0.8556 \\
        GCN+Tanh+PairNorm & -0.5963 & 0.0346 & -0.5401 & -0.3358 & -0.0631 & \textbf{0.9838} & 0.9298 &  0.3123 \\ 
        GCN+LReLU+DropEdge & -0.7497&0.7185&-0.7939&-0.7974&-0.4619&\textbf{0.9720}&0.5782&0.2515 \\
        GCN+Tanh+DropEdge & 0.2319&0.8388&0.5872&0.8887&0.8763&\textbf{0.9934}&0.9558&0.2096 \\ 
        GCN+LReLU+Residual & -0.0857 & -0.3072& -0.0559& -0.1611& -0.1656& -0.2466& -0.1296& 1.0046 \\
        GCN+Tanh+Residual & -0.6751&-0.7406&-0.3031&-0.4019&-0.6043& -0.5425&-0.4897&1.0122 \\ 
        \midrule
        Average correlation & -0.3801&0.3901&-0.2525&0.2616&0.2946&\textbf{0.6638}&0.6280 \\
        \bottomrule

    \end{tabular}
    \caption{Additional correlation coefficient results on GCNs with different network components.
    }
    \label{tab:architecture_coeff}
\end{table}

\clearpage
\begin{samepage}

\section{Metric Behaviour Examples on Cora, Citeseer and Pubmed} \label{apd:additional_empirical_results}
We extend \cref{fig:metric_cora_eg} to additional datasets and network components. The experimental setting is consistent with that of \cref{sec:experiments}.

\begin{table*}[h!]
    \centering
        \begin{tabular}{p{0.45\linewidth}|p{0.45\linewidth}}
        \toprule
        \makecell[c]{GCN + LReLU \\ $r^*_\mathrm{ER}=1.450100, \quad r^*_\mathrm{NR}=1.207918$} & \makecell[c]{GCN + Tanh \\ $r^*_\mathrm{ER}=1.690520, \quad r^*_\mathrm{NR}=1.007135$}\\ 
        \hspace{30pt}\includegraphics[width=\realCaseMetricsWidth]{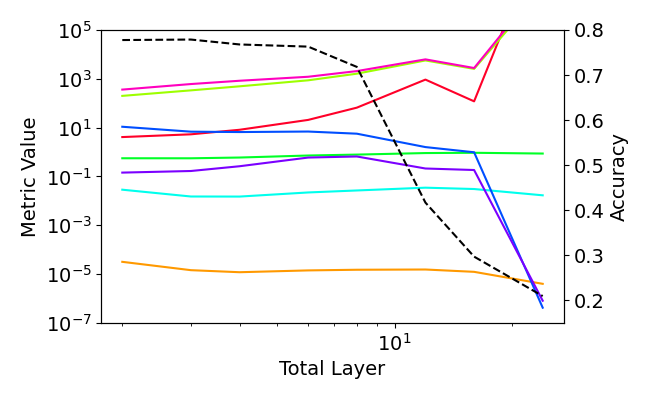} & \hspace{30pt}\includegraphics[width=\realCaseMetricsWidth]{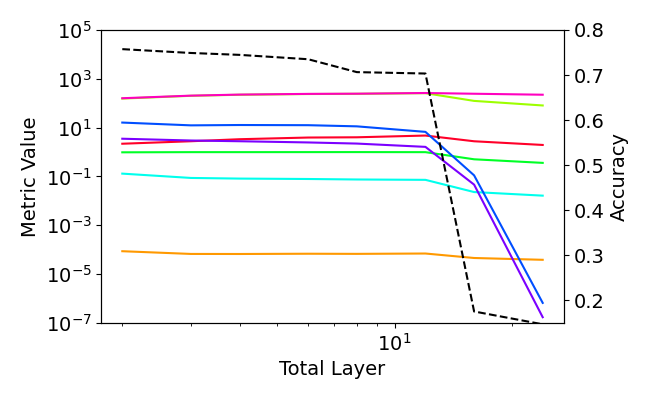}\\
        \midrule
        \makecell[c]{GCN + LReLU + Bias \\ $r^*_\mathrm{ER}=1.454981, \quad r^*_\mathrm{NR}=1.020319$} & \makecell[c]{GCN + Tanh + Bias \\ $r^*_\mathrm{ER}=1.858602, \quad r^*_\mathrm{NR}=1.021634$}\\
        \hspace{30pt}\includegraphics[width=\realCaseMetricsWidth]{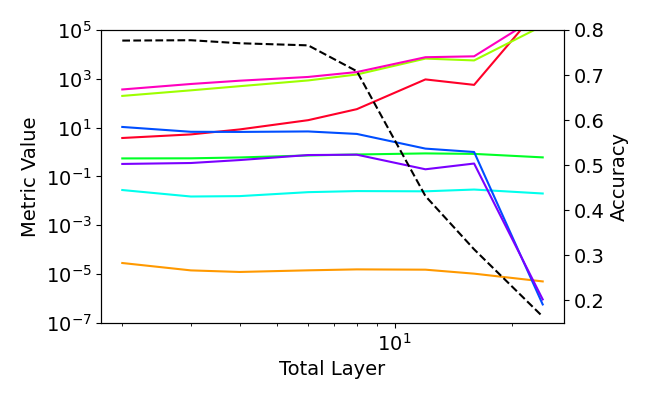} & \hspace{30pt}\includegraphics[width=\realCaseMetricsWidth]{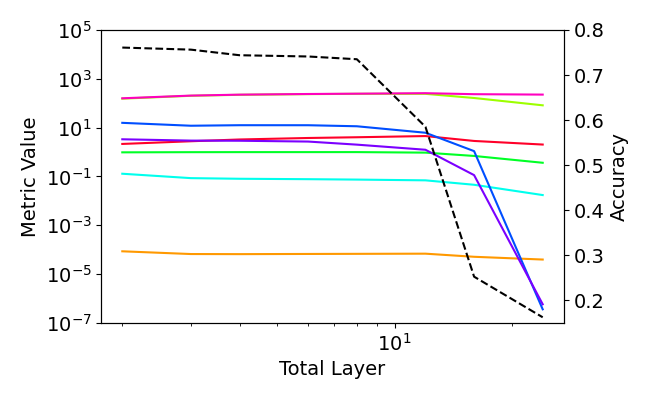}\\
        \midrule
        \makecell[c]{GCN + LReLU + LayerNorm \\ $r^*_\mathrm{ER}=1.017424, \quad r^*_\mathrm{NR}=1.000101$} & \makecell[c]{GCN + Tanh + LayerNorm \\ $r^*_\mathrm{ER}=1.528688, \quad r^*_\mathrm{NR}=1.003682$}\\
        \hspace{30pt}\includegraphics[width=\realCaseMetricsWidth]{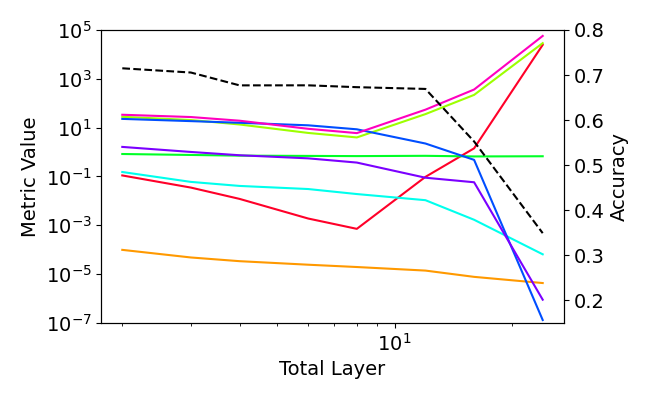} & \hspace{30pt}\includegraphics[width=\realCaseMetricsWidth]{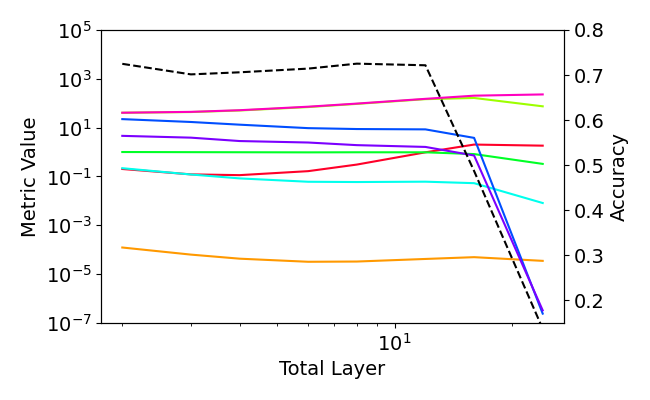}\\
        \midrule
        \makecell[c]{GAT + LReLU \\ $r^*_\mathrm{ER}=1.295623, \quad r^*_\mathrm{NR}=1.079250$} & \makecell[c]{GAT + Tanh \\ $r^*_\mathrm{ER}=1.002191, \quad r^*_\mathrm{NR}=1.000006$}\\
        \hspace{30pt}\includegraphics[width=\realCaseMetricsWidth]{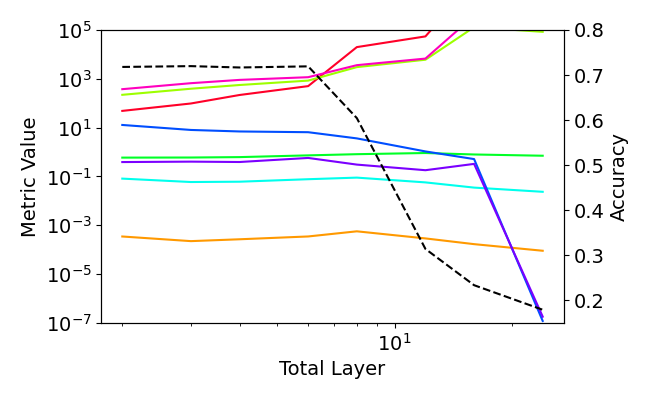} & \hspace{30pt}\includegraphics[width=\realCaseMetricsWidth]{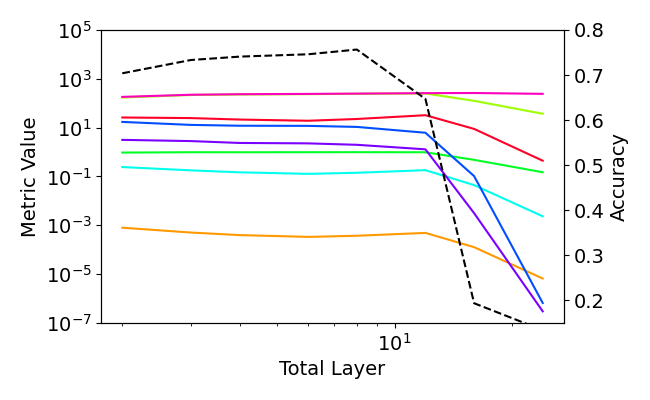}\\
        \midrule
        \multicolumn{2}{c}{\includegraphics[width=0.95\textwidth]{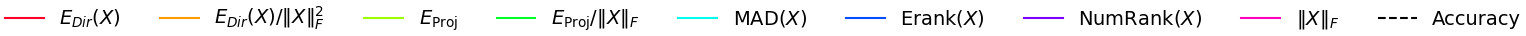}} \\
        \bottomrule
    \end{tabular}
    \caption{The table showcases the behaviour of different metrics and the classification accuracies for 8 GNNs separately trained on Cora Dataset. This table is an extension of figure \ref{fig:metric_cora_eg}}
    \label{tab:fig_trained_cases_cora}
\end{table*}

\end{samepage}

\begin{table*}[h!]
    \centering
        \begin{tabular}{p{0.45\linewidth}|p{0.45\linewidth}}
        \toprule
        \makecell[c]{GCN + LReLU \\ $r^*_\mathrm{ER}=1.750563, \quad r^*_\mathrm{NR}=1.142120$} & \makecell[c]{GCN + Tanh \\ $r^*_\mathrm{ER}=1.436012, \quad r^*_\mathrm{NR}=1.003035$}\\ 
        \hspace{30pt}\includegraphics[width=\realCaseMetricsWidth]{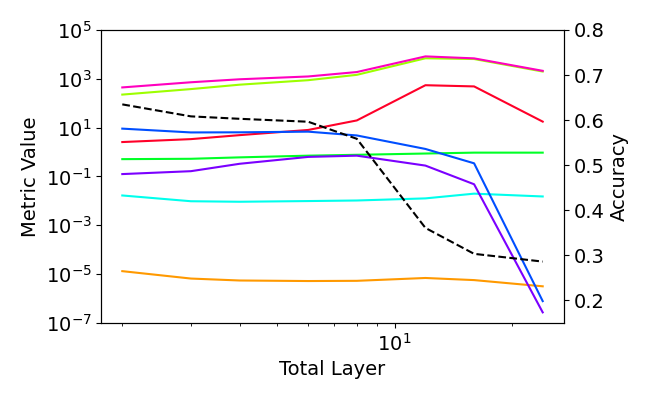} & \hspace{30pt}\includegraphics[width=\realCaseMetricsWidth]{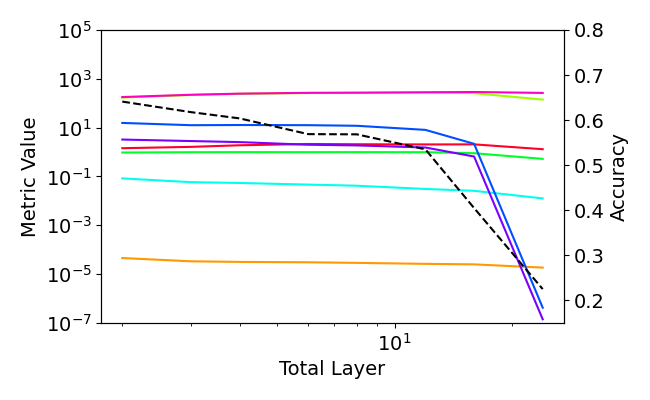}\\
        \midrule
        \makecell[c]{GCN + LReLU + Bias \\ $r^*_\mathrm{ER}=1.526297, \quad r^*_\mathrm{NR}=1.045451$} & \makecell[c]{GCN + Tanh + Bias \\ $r^*_\mathrm{ER}=1.752850, \quad r^*_\mathrm{NR}=1.007417$}\\ 
        \hspace{30pt}\includegraphics[width=\realCaseMetricsWidth]{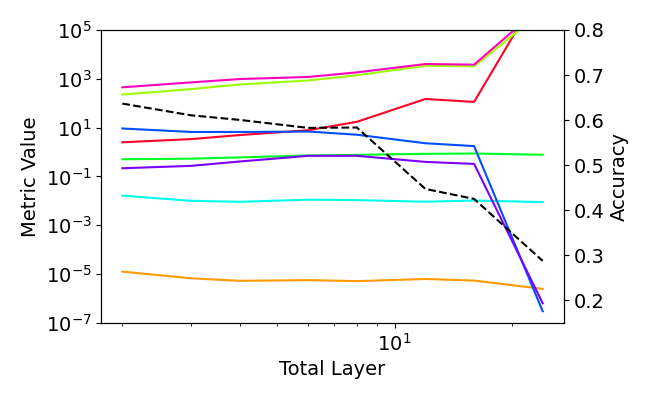} & \hspace{30pt}\includegraphics[width=\realCaseMetricsWidth]{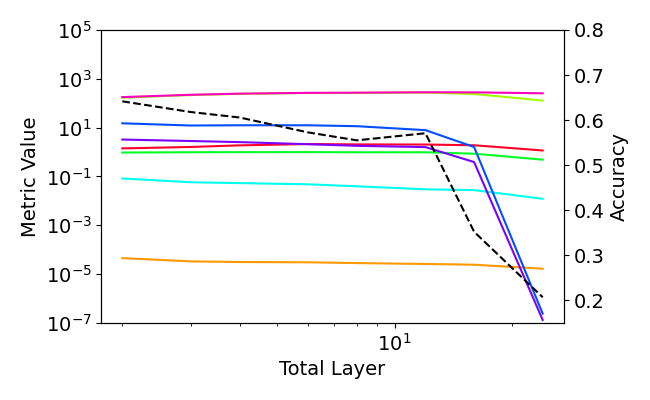}\\
        \midrule
        \makecell[c]{GCN + LReLU + LayerNorm \\ $r^*_\mathrm{ER}=1.006405, \quad r^*_\mathrm{NR}=1.000009$} & \makecell[c]{GCN + Tanh + LayerNorm \\ $r^*_\mathrm{ER}=1.473417, \quad r^*_\mathrm{NR}=1.002555$}\\ 
        \hspace{30pt}\includegraphics[width=\realCaseMetricsWidth]{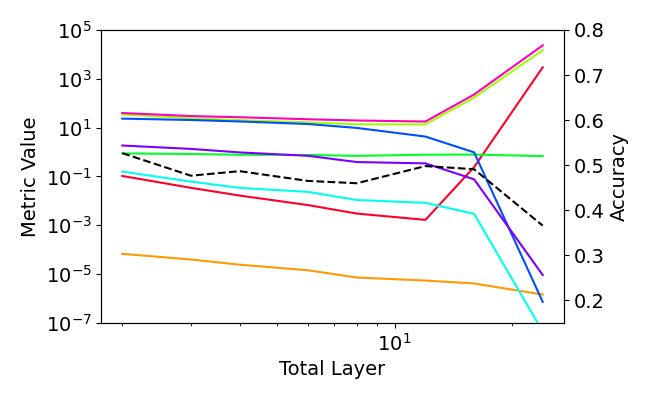} & \hspace{30pt}\includegraphics[width=\realCaseMetricsWidth]{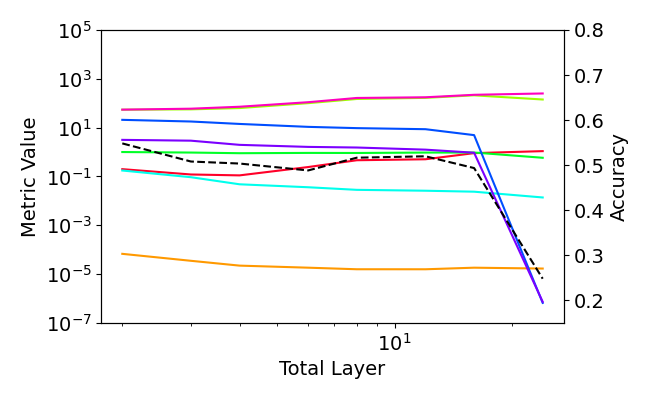}\\
        \midrule
        \makecell[c]{GAT + LReLU \\ $r^*_\mathrm{ER}=1.541711, \quad r^*_\mathrm{NR}=1.148837$} & \makecell[c]{GAT + Tanh \\ $r^*_\mathrm{ER}=1.003797, \quad r^*_\mathrm{NR}=1.000005$}\\ 
        \hspace{30pt}\includegraphics[width=\realCaseMetricsWidth]{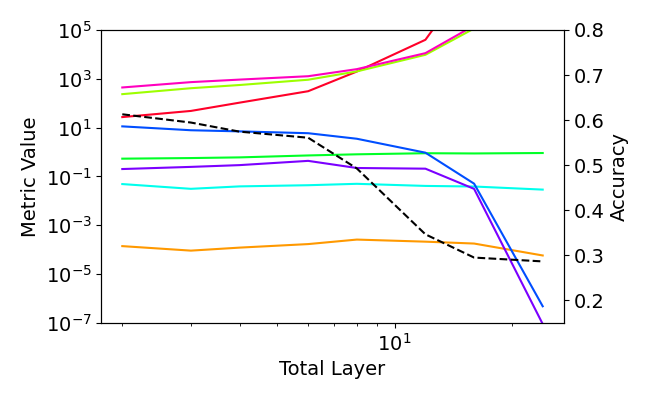} & \hspace{30pt}\includegraphics[width=\realCaseMetricsWidth]{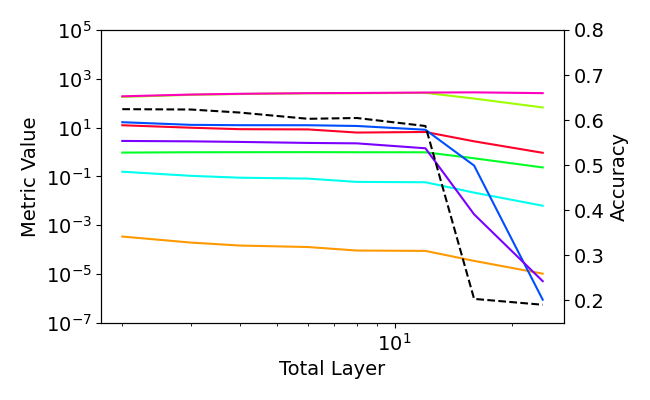}\\
        \midrule
        \multicolumn{2}{c}{\includegraphics[width=0.95\textwidth]{ICML_submission/imgs/RealCaseTableLegend.png}} \\
        \bottomrule
    \end{tabular}
    \caption{The table showcases the behaviour of different metrics and the classification accuracies for 8 GNNs separately trained on Citeseer Dataset.}
    \label{tab:fig_trained_cases_citeseer}
\end{table*}

\begin{table*}[h!]
    \centering
        \begin{tabular}{p{0.45\linewidth}|p{0.45\linewidth}}
        \toprule
        \makecell[c]{GCN + LReLU \\ $r^*_\mathrm{ER}=1.420913, \quad r^*_\mathrm{NR}=1.085279$} & \makecell[c]{GCN + Tanh \\ $r^*_\mathrm{ER}=1.838982, \quad r^*_\mathrm{NR}=1.014851$}\\ 
        \hspace{30pt}\includegraphics[width=\realCaseMetricsWidth]{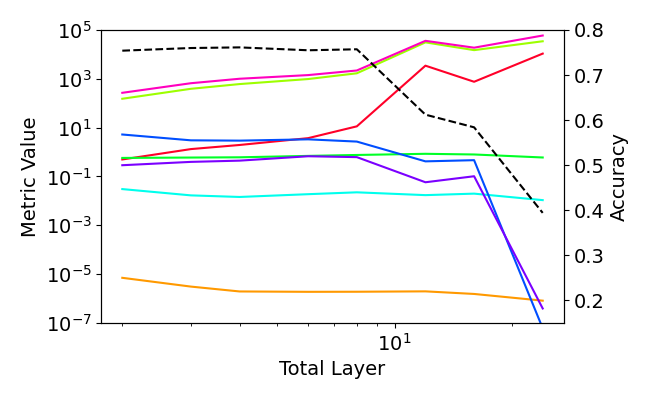} & \hspace{30pt}\includegraphics[width=\realCaseMetricsWidth]{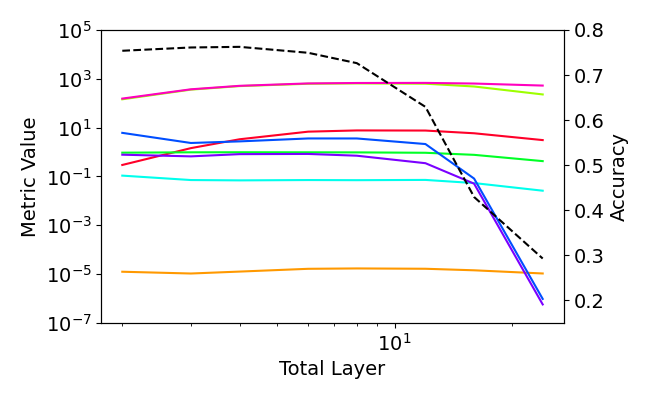}\\
        \midrule
        \makecell[c]{GCN + LReLU + Bias \\ $r^*_\mathrm{ER}=1.450100, \quad r^*_\mathrm{NR}=1.207918$} & \makecell[c]{GCN + Tanh + Bias \\ $r^*_\mathrm{ER}=1.443576, \quad r^*_\mathrm{NR}=1.018361$}\\
        \hspace{30pt}\includegraphics[width=\realCaseMetricsWidth]{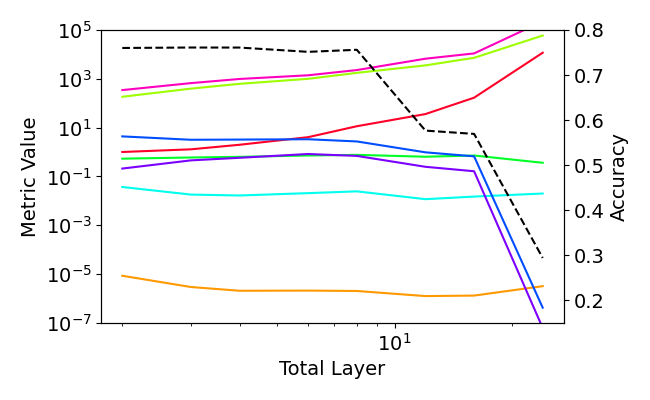} & \hspace{30pt}\includegraphics[width=\realCaseMetricsWidth]{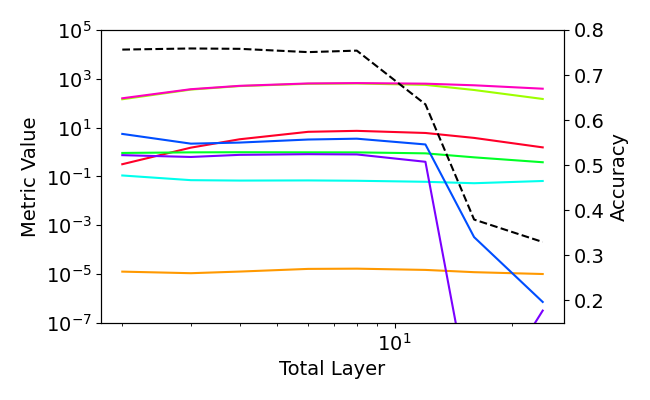}\\
        \midrule
        \makecell[c]{GCN + LReLU + LayerNorm \\ $r^*_\mathrm{ER}=1.002088, \quad r^*_\mathrm{NR}=1.000074$} & \makecell[c]{GCN + Tanh + LayerNorm \\ $r^*_\mathrm{ER}=1.668289, \quad r^*_\mathrm{NR}=1.009275$}\\
        \hspace{30pt}\includegraphics[width=\realCaseMetricsWidth]{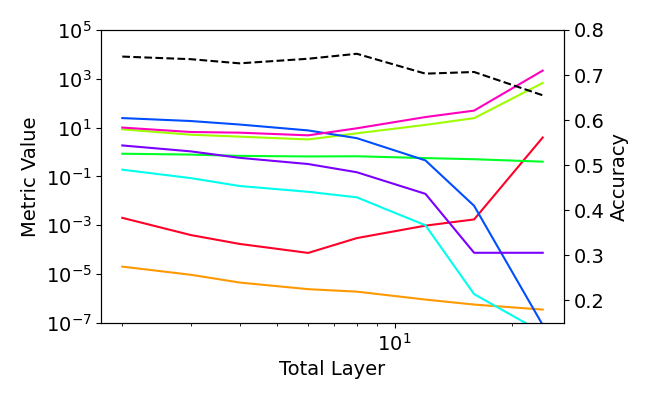} & \hspace{30pt}\includegraphics[width=\realCaseMetricsWidth]{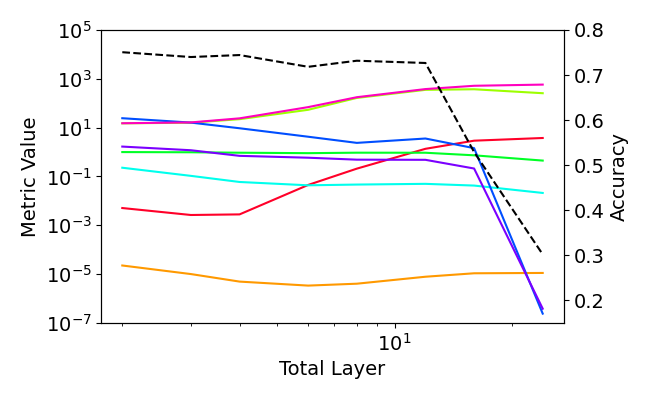}\\
        \midrule
        \makecell[c]{GAT + LReLU \\ $r^*_\mathrm{ER}=1.577163 \quad r^*_\mathrm{NR}=1.119213$} & \makecell[c]{GAT + Tanh \\ $r^*_\mathrm{ER}=1.016096, \quad r^*_\mathrm{NR}=1.000019$}\\
        \hspace{30pt}\includegraphics[width=\realCaseMetricsWidth]{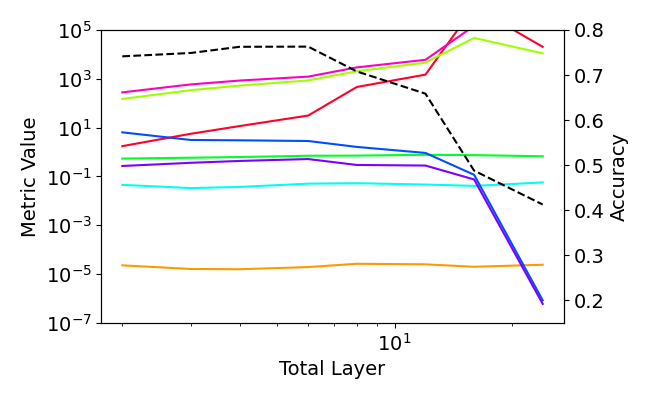} & \hspace{30pt}\includegraphics[width=\realCaseMetricsWidth]{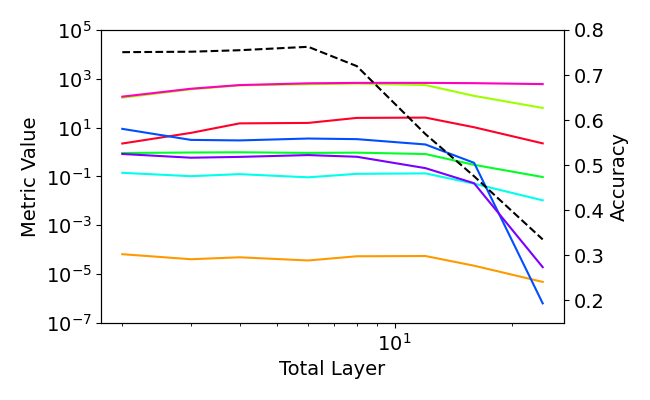}\\
        \midrule
        \multicolumn{2}{c}{\includegraphics[width=0.95\textwidth]{ICML_submission/imgs/RealCaseTableLegend.png}} \\
        \bottomrule
    \end{tabular}
    \caption{The table showcases the behaviour of different metrics and the classification accuracies for 8 GNNs separately trained on Pubmed Dataset.}
    \label{tab:fig_trained_cases_pubmed}
\end{table*}

\end{document}